\DeclareMathOperator*{\argmin}{arg\,min}
\def\spacingset#1{\renewcommand{\baselinestretch}%
{#1}\small\normalsize} \spacingset{1}
\def\maxwidth{\ifdim\Gin@nat@width>\linewidth\linewidth\else\Gin@nat@width\fi}
\def\maxheight{\ifdim\Gin@nat@height>\textheight\textheight\else\Gin@nat@height\fi}
\def\fps@figure{htbp}
\providecommand{\tightlist}{%
  \setlength{\itemsep}{0pt}\setlength{\parskip}{0pt}}
\title{Popularity Adjusted Block Models are Generalized Random Dot Product Graphs}
\author{John Koo\\Department of Statistics, Indiana University\\ \\
Minh Tang\\Department of Statistics, North Carolina State University\\ \\
Michael W. Trosset\\Department of Statistics, Indiana University\\ \\
September 8, 2021}
\date{}
\begin{document}

\maketitle

\begin{abstract}
We connect two random graph models, the Popularity Adjusted Block Model
(PABM) and the Generalized Random Dot Product Graph (GRDPG),
by demonstrating that the PABM is a special case of 
the GRDPG in which communities correspond to 
mutually orthogonal subspaces of latent vectors. 
This insight allows us to construct new algorithms for community detection
and parameter estimation for the PABM, 
as well as improve an existing algorithm 
that relies on Sparse Subspace Clustering. 
Using established asymptotic properties of 
Adjacency Spectral Embedding for the GRDPG, 
we derive asymptotic properties of these algorithms. 
In particular, we demonstrate that the absolute number of 
community detection errors tends to zero as 
the number of graph vertices tends to infinity. 
Simulation experiments illustrate these properties. 
\end{abstract}

\noindent%
{\it Keywords:} network analysis, block models, generalized random dot product graphs, community detection, sparse subspace clustering

\providecommand{\tightlist}{%
  \setlength{\itemsep}{0pt}\setlength{\parskip}{0pt}}
\newcommand{\diag}{\mathrm{diag}}
\newcommand{\tr}{\mathrm{Tr}}
\newcommand{\blockdiag}{\mathrm{blockdiag}}
\newcommand{\indep}{\stackrel{\mathrm{ind}}{\sim}}
\newcommand{\iid}{\stackrel{\mathrm{iid}}{\sim}}
\newcommand{\Bernoulli}{\mathrm{Bernoulli}}
\newcommand{\Betadist}{\mathrm{Beta}}
\newcommand{\BG}{\mathrm{BernoulliGraph}}
\newcommand{\PABM}{\mathrm{PABM}}
\newcommand{\RDPG}{\mathrm{RDPG}}
\newcommand{\GRDPG}{\mathrm{GRDPG}}
\newcommand{\Multinomial}{\mathrm{Multinomial}}
\newtheorem{theorem}{Theorem}
\newtheorem{lemma}{Lemma}
\newtheorem{proposition}{Proposition}
\theoremstyle{remark}
\newtheorem{remark}{Remark}
\theoremstyle{definition}
\newtheorem{definition}{Definition}
\newtheorem{example}{Example}
\newpage
\spacingset{1.5} 

\hypertarget{introduction}{%
\section{Introduction}\label{introduction}}

Statistical inference on random graphs requires a suitable probability
model.  A general probability model for unweighted and undirected
graphs is the Bernoulli Graph (also known as the inhomogeneous
Erd\"{o}s-R\'{e}nyi model), which assumes that edges occur as
independent Bernoulli trials.  A Bernoulli Graph is characterized by
an edge probability matrix $P=[P_{ij}]$, where an edge between
vertices $i$ and $j$ occurs with success probability $P_{ij}$.  A
trivial example of a Bernoulli Graph is the (homogeneous)
Erd\"{o}s-R\'{e}nyi model proposed by \citet{Gilbert:1959}, in which
the vertices of the random graph are fixed and possible edges occur
independently with fixed probability $P_{ij} = p$ for all $i,j$.  The requirement that
$P_{ij} \equiv p$ for all $i,j$ is too strong for most applications, and various
researchers have weakened that requirement in various ways.  The
present work relates two lines of generalization.

Network analysis is often concerned with community detection.  One form of community detection assumes that each vertex belongs to an unobserved community, with the probability of an edge between vertices $i$ and $j$ depending on the communities to which $i$ and $j$ belong.  Formally, one assigns each vertex $v_i$ a community label $z_i$ and assumes a Bernoulli Graph in which $P_{ij}$ is a function of $z_i$ and $z_j$.  Such models, called Block Models, define the goal of community detection as a problem in statistical inference: identify the true community (up to permutation of labels) to which each vertex belongs.

The classical Stochastic Block Model (SBM) of \citet{doi:10.1080/0022250X.1971.9989788} specifies that each edge probability $P_{ij}$ depends only on the labels $z_i$ and $z_j$, i.e., $P_{ij} = \omega_{z_i,z_j}$.  Subsequent researchers have weakened this assumption.   The Degree-Corrected Block Model (DCBM) of \citet{Karrer_2011} assigns an additional parameter $\theta_i$ to each vertex and sets $P_{ij} = \theta_i \theta_j \omega_{z_i,z_j}$. The Popularity Adjusted Block Model (PABM) of \citet{307cbeb9b1be48299388437423d94bf1} generalizes the DCBM, allowing heterogeneity of edge probabilities within and between communities while still maintaining distinct community structure.

Another type of Bernoulli Graph was proposed by
\citet*{10.1007/978-3-540-77004-6_11}.  A Random Dot Product Graph
(RDPG) specifies that each vertex corresponds to a latent position
vector in Euclidean space and that the probability of an edge between
two vertices is the dot product of their latent position vectors.
Thus, if the latent positions are $x_1,\ldots,x_n \in \mathbb{R}^d$
and $X = \bigl[ x_1 \mid \dots \mid x_n \bigr]^\top$,
then the edge probability matrix is $P = XX^\top$.  Clearly, any
Bernoulli Graph with positive semidefinite $P$ is an RDPG.  The
positive definite Euclidean inner product in the RDPG model was
replaced by an indefinite inner product in
\citet{rubindelanchy2017statistical}, resulting in the {\em
Generalized}\/ RDPG (GRDPG).

In contrast to Block Models, neither RDPGs nor GRDPGs inherently
specify distinct communities.  However, one can easily impose
community structure by assuming that the latent positions lie in
distinct clusters.  Hence, it is not surprising that Block Models can
be studied by reformulating them as RDPGs or GRDPGs.  For example, an
assortative SBM (an SBM for which $P$ is positive semidefinite) is
equivalent to an RDPG for which all vertices in the same community
correspond to the same latent position vector.
Likewise, the DCBM is equivalent to an RDPG for which all vertices in
the same community correspond to latent position vectors that lie on a
straight line. 

Because the edge probability matrix of a PABM is not necessarily
positive semidefinite, a PABM is not necessarily an RDPG.  In Section
\ref{connecting-the-pabm-to-the-grdpg} we demonstrate that every PABM
is in fact a specific type of GRDPG for which the latent position
vectors lie in distinct orthogonal subspaces, each subspace
corresponding to a community.  This identification is our central
result.  In Section \ref{methods}, we use the geometry of the GRDPG to
derive more efficient algorithms for detecting the communities and
estimating the parameters in the PABM.  We report the results of
simulation studies in Section \ref{simulated-examples} and apply our
methods to three well-known data sets in Section
\ref{real-data-examples}. Section \ref{discussion} concludes.
Proofs of Theorems 1 and 2 are provided in the body of the text while proofs of 
Theorems 3, 4, and 5 are provided in the Appendix. 

\hypertarget{connecting-the-popularity-adjusted-block-model-to-the-generalized-random-dot-product-graph}{%
\section{PABMs are GRDPGs}\label{connecting-the-popularity-adjusted-block-model-to-the-generalized-random-dot-product-graph}}

In this section, we show that the PABM is a special case of the GRDPG. 
More specifically, a graph $G$ drawn from the PABM can be represented by 
a collection of latent vectors in Euclidean space. 
We further show that the latent configuration that
induces the PABM consists of orthogonal subspaces with each subspace
corresponding to a community.

\hypertarget{notation}{%
\subsection{Notation and Scope}\label{notation}}

Let $G = (V, E)$ be an unweighted, undirected graph without self-loops, with vertex set $V$ ($|V| = n$) and edge set $E$. The matrix
$A \in \{0, 1\}^{n \times n}$ represents the adjacency matrix of $G$ 
such that $A_{ij} = 1$ if there exists an edge between vertices $i$ and $j$ 
and $0$ otherwise. 
$A_{ij} = A_{ji}$ and $A_{ii} = 0$ for each $i, j \in [n]$ (where $[n] = \{1, 2, ..., n\}$). 
We further restrict our analyses to Bernoulli graphs. 
Let $P \in [0, 1]^{n \times n}$ be a symmetric matrix of edge probabilities. 
Graph $G$ is sampled from $P$ by drawing $A_{ij} \indep \Bernoulli(P_{ij})$ 
for each $1 \leq i < j \leq n$ (setting $A_{ji} = A_{ij}$ and $A_{ii} = 0$). 
We denote $A \sim \BG(P)$ as a graph with adjacency matrix $A$ 
sampled from edge probability matrix $P$ in this manner. 
If each vertex has a hidden label in $[K]$, 
they are denoted as $z_1, ..., z_n$. 
$\lambda_{ik}$ denotes the popularity parameter of vertex $i$ to community $k$. 
$\Lambda$ is the $n \times K$ matrix of popularity parameters. 
Finally, we denote 
$X = \bigl[x_1 \mid \cdots \mid x_n\bigr]^\top \in \mathbb{R}^{n \times d}$ 
as the matrix corresponding to a collection of $n$ latent vectors $x_1, ..., x_n \in \mathbb{R}^d$.

\hypertarget{the-popularity-adjusted-block-model-and-the-generalized-random-dot-product-graph}{%
\subsection{Two Probability Models for Graphs}\label{the-popularity-adjusted-block-model-and-the-generalized-random-dot-product-graph}}

\begin{definition}[Popularity Adjusted Block Model]
\label{pabm}
Let $K \geq 1$ be an integer and let $\Lambda \in \mathbb{R}^{n \times
  K}$ be a matrix with entries in $[0,1]$. Let $z_1, z_2, \dots, z_n
\in [K]$. A graph $G$ with adjacency matrix $A$ is said to
be a popularity adjusted block model graph with $K$ communities, popularity vectors
$\Lambda$, and sparsity parameter $\rho_n \in (0,1]$ if $A \sim \BG(P)$
where the edge probability matrix $P$ has entries $P_{ij}$, $i, j \in [n]$, of the form
$$P_{ij} = \rho_n \lambda_{iz_j} \lambda_{jz_i}.$$
\end{definition}

\begin{remark}
  \label{rem:pabm_view2}
In a PABM, each vertex $i$ has $K$ popularity parameters 
$\lambda_{i1}, \dots, \lambda_{iK}$, that describe its affinity
toward each of the $K$ communities. 
Another view of a PABM is as follows.
Let $\tilde{P}$ be the matrix obtained by permuting the rows and
columns of $P$ so that the vertices are reorganized by community memberships $z_i \in \{1,2,\dots,K\}$ in increasing order. 
Denote by $\tilde{P}^{(k \ell)}$ the $n_k \times n_{\ell}$
submatrix of $\tilde{P}$ corresponding to the
edge probabilities between vertices in communities
$k$ and $\ell$. Here $n_k = |\{ i \colon z_i = k\}|$ is the number of
vertices assigned to community $k$, for $k = 1,2\dots,K$.
Note that $\tilde{P}^{(k \ell)} = (\tilde{P}^{(\ell k)})^\top$. Next
let $\lambda^{(k \ell)} = \{\lambda_{i \ell} \colon z_i = k\} \in
\mathbb{R}^{n_k}$; the elements of $\lambda^{(k \ell)}$ are the affinity parameters toward the $\ell$th community of
all vertices in the $k^{th}$ community. Define $\lambda^{(\ell k)}$
analogously. Then each block $\tilde{P}^{(k \ell)}$ can be written as the outer product of two vectors:
\begin{equation} \label{eq:pabm}
  \tilde{P}^{(k \ell)} = \rho_n \lambda^{(k \ell)} (\lambda^{(\ell k)})^{\top}.
\end{equation} 

We will henceforth use the notation \(A \sim \PABM(\{\lambda^{(k
  \ell)}\}_K, \rho_n)\) to denote
a random adjacency matrix \(A\) drawn from a PABM with $K$ communities, popularity parameters
\(\{\lambda^{(k \ell)}\}\) and sparsity parameter $\rho_n$.
\end{remark}

The sparsity parameter $\rho_n$ in the definition of the PABM
influences the degrees of the vertices in the sampled graphs $A \sim
\BG(P)$. In particular, for a fixed $\Lambda$, the graphs become
sparser as $\rho_n$ decreases. Note that $\rho_n$ and $\Lambda$ are not
uniquely identifiable, i.e., we can scale $\rho_n$ by a constant $c > 0$ and
scale $\Lambda$ by $c^{-1/2}$ without changing the edge probabilities
in $P$. Thus, for ease of exposition, we shall assume henceforth that
(1) $\Lambda$ is normalized to have Frobenius norm $\|\Lambda\|_{F} =
\sqrt{n}$ and (2) the $\ell_2$ norms of the rows of
$\|\Lambda\|$ are all bounded away from $0$. Under these two
assumptions, the sparsity parameter $\rho_n$ can now be viewed as
controlling the density of $A$, i.e., the average degree of
$A$ grows at rate $n \rho_n$ and the number of edges grows at rate
$n^2 \rho_n$. The choice $\rho_n \equiv 1$ and $\rho_n \rightarrow 0$
then corresponds to the dense graphs regime and semi-sparse graphs
regime, respectively. 

\begin{definition}[Generalized Random Dot Product Graph]
\label{grdpg}
Let $p \geq 1$ and $q \geq 0$ be integers. Define $I_{p,q}$ as the
block diagonal matrix $I_{p,q} = \Bigl[\begin{smallmatrix} I_{p} & 0
  \\ 0 & - I_{q} \end{smallmatrix} \Bigr]$ where $I_{p}$ and $I_{q}$
are the identity matrices of dimensions $p \times p$ and $q \times q$
respectively. Denote $d = p + q$ and let $\mathcal{X}$ be a subset of $\mathbb{R}^d$ such
that, for any $x \in \mathcal{X}$ and $y \in \mathcal{X}$, we have
$x^{\top} I_{p,q} y \in [0,1]$. 
Let $X = \bigl[x_1 \mid \cdots \mid x_n\bigr]^{\top}$ be
a $n \times d$ matrix with rows $x_i \in \mathcal{X}$. A graph $G$
with adjacency matrix $A$ is said to be a generalized random dot
product graph with latent positions $X$, sparsity parameter $\rho_n
\in (0,1]$
and signature $(p,q)$ if $A \sim \BG(P)$ where the edge probability
matrix $P$ is given
by $P = \rho_n X I_{p,q} X^{\top}$, i.e., the entries of $P$ are of
the form $P_{ij} = \rho_n x_i^{\top} I_{p,q} x_j.$
\end{definition}
We will use the notation \(A \sim \GRDPG_{p,q}(X;\rho_n)\) to denote a random
adjacency matrix \(A\) drawn from latent positions \(X\), sparsity
parameter $\rho_n$ and signature
\((p, q)\).

\begin{definition}[Indefinite Orthogonal Group]
  \label{def:indefinite}
The indefinite orthogonal group with signature $(p, q)$ is
the set $\{Q \in \mathbb{R}^{d \times d} \colon Q I_{p, q} Q^{\top} = I_{p, q}\}$,
denoted as $\mathbb{O}(p, q)$. Here $d = p + q$. 
\end{definition}

\begin{remark}
  \label{rem:non_identifiable}
The latent vectors that produce $X I_{p,q} X^\top = P$ are not unique
\citep{rubindelanchy2017statistical}.
More specifically, if $P_{ij} = x_i^\top I_{p, q} x_j$, 
then for any $Q \in \mathbb{O}(p, q)$ we also have 
$(Q x_i)^\top I_{p, q} (Q x_j) = x_i^\top (Q^\top I_{p, q} Q) x_j =
x_i^\top I_{p, q} x_j = P_{ij}$. Unlike in the RDPG case, transforming the latent positions via multiplication
by $Q \in \mathbb{O}(p, q)$ does not necessarily maintain interpoint angles or
distances.
\end{remark}

\begin{remark}
  \label{rem:ase}
We can use Adjacency Spectral Embedding (ASE) 
\citep{doi:10.1080/01621459.2012.699795} to recover the latent vectors of a GRDPG. 
This procedure consists of taking the spectral decomposition of $A$
and keeping the $p + q$ largest eigenvalues (in modulus) and
corresponding eigenvectors of $A$. More specifically, let $A$ be an $n
\times n$ adjacency matrix and denote the
eigendecomposition of $A$ as
$$A = \sum_{i} \hat{\lambda}_i \hat{v}_i \hat{v}_i^{\top}, \quad
\text{where} \,\,
|\hat{\lambda}|_1 \geq |\hat{\lambda}_2| \geq \dots \geq
|\hat{\lambda}_n|.$$
Let $\hat{D}$ be a diagonal matrix whose diagonal entries are
the eigenvalues $\{\hat{\lambda}_i\}_{i=1}^{d}$ arranged in decreasing values (not in
decreasing modulus) and let $\hat{V}$ be the $n \times d$ matrix whose
columns are the corresponding eigenvectors $\{\hat{v}_i\}_{i=1}^{d}$
arranged in the same order as the diagonal entries of $\hat{D}$. Now
define $\hat{Z} = \hat{V} |\hat{D}|^{1/2}$ where the $|\cdot|$
operation is applied elementwise to the entries of $\hat{D}$. Then
$\hat{Z}$ serves as an estimate of $X$, up to some non-identifiable
orthogonal transformation $Q$ as described in
Remark~\ref{rem:non_identifiable} and Definition~\ref{def:indefinite};
see \citet{rubindelanchy2017statistical} for further details.
\end{remark}

\hypertarget{connecting-the-pabm-to-the-grdpg}{%
\subsection{The Geometry of PABMs}\label{connecting-the-pabm-to-the-grdpg}}
Now that we defined the PABM and GRDPG, 
we show the special geometry of the PABM when viewed as a GRDPG. For
ease of exposition, and without loss of generality, we drop the
dependency on the sparsity parameter $\rho_n$ 
and assume $\rho_n \equiv 1$ throughout this subsection. 
\begin{theorem}[The latent configuration of the PABM]
\label{theorem2}
Let $A \sim \mathrm{PABM}(\{\lambda^{(k \ell)}\}_K)$ be an instance of a
PABM with $K \geq 1$ blocks and latent vectors $\{\lambda^{(k \ell)}
\colon 1 \leq k \leq K, 1 \leq \ell \leq K\}$. 
Then there exists a block diagonal matrix
$X \in \mathbb{R}^{n \times K^2}$ defined by $\{\lambda^{(k \ell)}\}$ and a 
$K^2 \times K^2$ {\em fixed} orthonormal matrix $U$ such 
that $A \sim \mathrm{GRDPG}_{K (K+1) / 2, K (K-1) /
  2}(\tilde{\Pi}XU)$. Here $\tilde{\Pi}$ is the permutation matrix
such that $P = \tilde{\Pi} \tilde{P} \tilde{\Pi}^{\top}$ where the
rows and columns of $\tilde{P}$ are arranged according to increasing values of the
community labels (see Remark~\ref{rem:pabm_view2}). 
\end{theorem}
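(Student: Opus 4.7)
The plan is to exhibit a block-diagonal $X \in \mathbb{R}^{n \times K^2}$ built directly from the $\lambda^{(k \ell)}$ and a single fixed symmetric matrix $M \in \mathbb{R}^{K^2 \times K^2}$ (depending only on $K$) so that $\tilde{P} = X M X^\top$. Once such an $M$ is found, I will diagonalize it as $M = U I_{p,q} U^\top$ with $U$ orthonormal, which immediately yields $\tilde{P} = (XU) I_{p,q} (XU)^\top$ and, after undoing the permutation, $P = (\tilde{\Pi} X U) I_{p,q} (\tilde{\Pi} X U)^\top$. Invoking Definition~\ref{grdpg} will then give $A \sim \GRDPG_{p,q}(\tilde{\Pi} X U)$, and the inertia of $M$ will force $(p, q) = (K(K+1)/2, K(K-1)/2)$.

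For the explicit construction, I take $L^{(k)} = \bigl[\lambda^{(k1)} \mid \cdots \mid \lambda^{(kK)}\bigr] \in \mathbb{R}^{n_k \times K}$ and $X = \blockdiag\bigl(L^{(1)}, \ldots, L^{(K)}\bigr)$. For $M$, I index the $K^2$ rows and columns by pairs $(k,r) \in [K] \times [K]$ and set $M_{(k,r),(\ell,s)} = 1$ precisely when $(\ell, s) = (r, k)$; equivalently, the $(k, \ell)$-block of $M$ (viewed as a $K \times K$ array of $K \times K$ blocks) is $e_\ell e_k^\top$. The block-diagonal multiplication then gives the $(k,\ell)$-block of $X M X^\top$ as $L^{(k)} \bigl(e_\ell e_k^\top\bigr) (L^{(\ell)})^\top = \lambda^{(k \ell)} (\lambda^{(\ell k)})^\top$, matching~(\ref{eq:pabm}) with $\rho_n \equiv 1$; this confirms $\tilde{P} = X M X^\top$.

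The inertia of $M$ follows from recognizing it as the involution on $\mathbb{R}^{K^2}$ that swaps indices via $(k, r) \mapsto (r, k)$: $M$ is symmetric, satisfies $M^2 = I$, and hence has eigenvalues $\pm 1$ with $+1$-eigenspace $\{v \colon v_{(k,r)} = v_{(r,k)}\}$ of dimension $K(K+1)/2$ and $-1$-eigenspace $\{v \colon v_{(k,r)} = -v_{(r,k)}\}$ of dimension $K(K-1)/2$. Assembling orthonormal bases for these two eigenspaces into a single orthonormal $U$ (with the $+1$-eigenvectors placed first) yields $M = U I_{K(K+1)/2, K(K-1)/2} U^\top$ with $U$ depending only on $K$. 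The main obstacle is guessing the correct $M$: it has to be simultaneously compatible with the block-diagonal structure of $X$, reproduce the rank-one outer-product blocks $\lambda^{(k \ell)} (\lambda^{(\ell k)})^\top$, and realize the prescribed signature. Once one sees that the index-swap operator accomplishes all three at once, the remaining steps are elementary linear algebra.
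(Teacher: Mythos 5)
Your proof is correct and follows essentially the same route as the paper: your matrix $M$ is precisely the permutation matrix $\Pi$ that the paper obtains via the auxiliary factorization $\tilde{P} = XY^\top$ with $Y = X\Pi$, and both arguments compute the signature from the $K$ fixed points and $K(K-1)/2$ two-cycles of the index-swap involution. The only difference is that you write $\Pi$ down directly and verify $\tilde{P} = X \Pi X^\top$ blockwise rather than passing through $Y$, which is a purely cosmetic shortcut.
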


\begin{proof}
We will prove this theorem in two parts. First, for demonstration
purposes, we focus on the case when $K = 2$ to build intuition. 
The general case of $K \geq 2$ is presented later.  

For the $K = 2$ case, the proof is straightforward. We will first work with
the matrix $\tilde{P}$. Note that $\tilde{P}$ has the form

$$\tilde{P} = \begin{bmatrix} P^{(11)} & P^{(12)} \\ P^{(21)} &
  P^{(22)} \end{bmatrix} = \begin{bmatrix} \lambda^{(11)} (\lambda^{(11)})^\top & \lambda^{(12)} (\lambda^{(21)})^\top \\
  \lambda^{(21)} (\lambda^{(12)})^\top & \lambda^{(22)}
  (\lambda^{(22)})^\top \end{bmatrix}.$$
  Now let
$$X = \begin{bmatrix}
\lambda^{(11)} & \lambda^{(12)} & 0 & 0 \\
0 & 0 & \lambda^{(21)} & \lambda^{(22)}
\end{bmatrix} \quad \text{and} \quad
U = \begin{bmatrix} 1 & 0 & 0 & 0 \\
0 & 0 & 1 / \sqrt{2} & 1 / \sqrt{2} \\
0 & 0 & 1 / \sqrt{2} & - 1 / \sqrt{2} \\
0 & 1 & 0 & 0 \end{bmatrix}.$$
Then by straightforward matrix multiplication, we obtain 
$$X U I_{3, 1} U^\top X^\top =
\begin{bmatrix}
  \lambda^{(11)} (\lambda^{(11)})^\top & \lambda^{(12)} (\lambda^{(21)})^\top \\
  \lambda^{(21)} (\lambda^{(12)})^\top & \lambda^{(22)} (\lambda^{(22)})^\top
\end{bmatrix} = \tilde{P}$$
and hence $\tilde{P}$ also corresponds to the edge probability matrix of GRDPG
with latent vectors described by $X U$. As $P = \tilde{\Pi} \tilde{P}
\tilde{\Pi}^{\top}$ we conclude that $P$ has latent vectors described
by $\tilde{\Pi} X U$. 

It is nevertheless instructive to look at a few intermediate steps. 
More specifically, the product $U I_{3, 1} U^\top$ 
yields a permutation matrix $\Pi$ with fixed points at positions $1$ and $4$ 
and a cycle of order 2 swapping positions $2$ and $3$, i.e., 
$$\Pi = U I_{3, 1} U^\top = \begin{bmatrix} 1 & 0 & 0 & 0 \\
  0 & 0 & 1 & 0 \\
  0 & 1 & 0 & 0 \\
  0 & 0 & 0 & 1
\end{bmatrix}.$$
Furthermore, as $U$ is orthonormal and $I_{3, 1}$ is diagonal, 
$U I_{3, 1} U^\top$ is also an eigendecomposition of $\Pi$ where the fixed
points of $\Pi$ are mapped to the eigenvectors $e_1$ and $e_4$
while the cycles of order two are mapped to the eigenvectors  
$\tfrac{1}{\sqrt{2}}(e_{2} + e_3)$ and $\tfrac{1}{\sqrt{2}}(e_{2} -
e_3)$; here $e_i$ denote the $i^\mathrm{th}$ basis vector in $\mathbb{R}^{4}$. 
Thus, another way of decomposing the edge probability matrix is
$\tilde{P} = X \Pi X^\top$ where the rows of $X$ lie in the union of
two 2-dimensional orthogonal subspaces and $\Pi$ is a permutation matrix. 

For the general case, we can extend $\tilde{P} = X \Pi X^\top$ to larger $K$. 
For a more concrete example of this, refer to Example~1. 
We once again consider $\tilde{P}$ as defined in
Remark~\ref{rem:pabm_view2}.  
We first define the following matrices

\begin{gather}
\label{eq:xy}
\Lambda^{(k)} = \begin{bmatrix} \lambda^{(k1)} \mid \cdots \mid \lambda^{(kK)} \end{bmatrix}
\in \mathbb{R}^{n_k \times K}, \quad
X = \mathrm{blockdiag}(\Lambda^{(1)}, \dots, \Lambda^{(K)}) \in
\mathbb{R}^{n \times K^2}, \\
L^{(k)} = \mathrm{blockdiag}(\lambda^{(1k)}, \dots, \lambda^{(Kk)}) \in
\mathbb{R}^{n \times K}, \quad
Y = \begin{bmatrix} L^{(1)} \mid \cdots \mid L^{(K)} \end{bmatrix} \in
\mathbb{R}^{n \times K^2}.
\end{gather}

It is then straightforward to verify that

\begin{gather*}
  XY^{\top} = \mathrm{blockdiag}(\Lambda^{(1)}, \dots,
\Lambda^{(K)}) \begin{bmatrix} L_1^{\top} \\ \vdots \\
  L_{K}^{\top} \end{bmatrix} = \begin{bmatrix} \Lambda^{(1)}
  (L^{(1)})^{\top} \\ \vdots \\
  \Lambda^{(K)} (L^{(K)})^{\top} \end{bmatrix}, \\
\Lambda^{(k)} (L^{(k)})^{\top} = \begin{bmatrix} \lambda^{(k1)}
  (\lambda^{(1k)})^{\top} \mid \dots \mid \lambda^{(kK)}
  (\lambda^{(Kk)})^{\top} \end{bmatrix} = \begin{bmatrix} P^{(k1)}
  \mid P^{(k2)} \mid \dots \mid P^{(kK)} \end{bmatrix}.
\end{gather*}
We therefore have $\tilde{P} = X Y^\top$. 
Similar to the $K = 2$ case, we also have $Y = X \Pi$ for some permutation matrix
$\Pi$ and hence $\tilde{P} = X \Pi X^\top$.
The permutation described by $\Pi$ now has $K$ fixed points, which correspond to
$K$ eigenvalues equal to $1$ with corresponding eigenvectors $e_k$ where
$k = r (K + 1) + 1$ for $0 \leq r \leq K - 1$. It also has
$\tbinom{K}{2}$ cycles of order $2$. Each cycle corresponds to
a pair of eigenvalues $\{-1,+1\}$ and a pair of eigenvectors
$\{(e_s + e_t)/\sqrt{2},(e_s - e_t)/ \sqrt{2}\}$. 

Let $p = K (K + 1) / 2$ and $q = K (K - 1) / 2$. 
We therefore have
\begin{equation} \label{eq:permutation}
\Pi = U I_{p,q} U^\top
\end{equation}
where $U$ is a $K^2 \times K^2$ orthogonal matrix and hence
\begin{equation} \label{eq:pabm-grdpg}
\tilde{P} = X U I_{p, q} (X U)^\top.
\end{equation}
In summary we can describe the PABM with $K$ communities as a GRDPG with latent
positions $\tilde{\Pi} X U$ and known signature $(p,q) = \bigl( \tfrac{1}{2} K (K + 1) ,
\tfrac{1}{2} K (K - 1)\bigr)$.
\end{proof}

\begin{example} Let $A$ be a $3$ blocks PABM with latent vectors
  $\{\lambda^{(k \ell)} \colon 1 \leq k \leq 3, 1 \leq \ell \leq 3\}$. Using the same notation as in Theorem
  \ref{theorem2}, we can define
\begin{gather*}
X = \begin{bmatrix}
\lambda^{(11)} & \lambda^{(12)} & \lambda^{(13)} & 0 & 0 & 0 & 0 & 0 & 0 \\
0 & 0 & 0 & \lambda^{(21)} & \lambda^{(22)} & \lambda^{(23)} & 0 & 0 & 0 \\
0 & 0 & 0 & 0 & 0 & 0 & \lambda^{(31)} & \lambda^{(32)} & \lambda^{(33)}
\end{bmatrix}, \\
Y = \begin{bmatrix}
\lambda^{(11)} & 0 & 0 & \lambda^{(12)} & 0 & 0 & \lambda^{(13)} & 0 & 0 \\
0 & \lambda^{(21)} & 0 & 0 & \lambda^{(22)} & 0 & 0 & \lambda^{(23)} & 0 \\
0 & 0 & \lambda^{(31)} & 0 & 0 & \lambda^{(32)} & 0 & 0 & \lambda^{(33)}
\end{bmatrix}.
\end{gather*}
Then $Y = X \Pi$ and $\tilde{P} = X Y^{\top}$ where $\Pi$ is a $9 \times 9$ 
permutation matrix of the form
$$\Pi = 
\Bigl[e_1 \mid e_4 \mid e_7 \mid e_2 \mid e_5 \mid e_8 \mid e_3
\mid e_6 \mid e_9 \Bigr].$$
where $e_i$ denotes the $i^{th}$ basis vector in $\mathbb{R}^{9}$. 
The matrix $\Pi$ corresponds to a permutation of $\{1,2,\dots,9\}$
with the following decomposition.
\begin{enumerate}
\item Positions 1, 5, 9 are fixed.
\item There are three cycles of length 2, namely $(2, 4)$, $(3, 7)$, and $(6, 8)$.
\end{enumerate}
We can thus write $\Pi$ as $\Pi = U I_{6, 3} U^\top$ where the first three
columns of $U$ consist of $e_1$, $e_5$, and $e_9$ corresponding to the
fixed points, the next three columns are the eigenvectors
$(e_k + e_{\ell}) / \sqrt{2}$, and the last three columns are the eigenvectors
$(e_k - e_{\ell}) / \sqrt{2}$ for $(k, \ell) \in
\{(2,4),(3,7),(6,8)\}$.

The matrix $\tilde{P}$ is then the edge probabilities matrix for a 
Generalized Random Dot Product Graph whose latent positions 
are the rows of the matrix
$$XU = \begin{bmatrix}
  \lambda^{(11)} & 0 & 0 &
  \frac{\lambda^{(12)}}{\sqrt{2}} & \frac{\lambda^{(13)}}{\sqrt{2}} & 0 &
  \frac{\lambda^{(12)}}{\sqrt{2}} & \frac{\lambda^{(13)}}{\sqrt{2}} & 0 \\
  0 & \lambda^{(22)} & 0 &
  \frac{\lambda^{(21)}}{\sqrt{2}} & 0 & \frac{\lambda^{(23)}}{\sqrt{2}} &
  -\frac{\lambda^{(21)}}{\sqrt{2}} & 0 & \frac{\lambda^{(23)}}{\sqrt{2}} \\
  0 & 0 & \lambda^{(33)} &
  0 & \frac{\lambda^{(31)}}{\sqrt{2}} & \frac{\lambda^{(32)}}{\sqrt{2}} &
  0 & -\frac{\lambda^{(31)}}{\sqrt{2}} & -\frac{\lambda^{(32)}}{\sqrt{2}}
\end{bmatrix}$$
and the latent positions for $P$ is a permutation of the rows of
$XU$. 
\end{example}

\hypertarget{methods}{%
\section{Algorithms}\label{methods}}

Two inference objectives arise from the PABM:

\begin{enumerate}
\def\labelenumi{\arabic{enumi}.}
\tightlist
\item
  Community membership identification (up to permutation).
\item
  Parameter estimation (estimating \(\lambda^{(k \ell)}\)'s).
\end{enumerate}

In our methods, the data that are observed for estimation is the adjacency matrix, $A \sim \PABM(\{\lambda^{(k \ell)}\}_K, \rho_n)$, along with an assumed number of communities, $K$. 
To motivate our methods, we first consider community detection and parameter estimation in the case where we know the edge probability matrix $P$ beforehand, noting that community memberships and popularity parameters are not immediately discernible from $P$ itself. 
After establishing methods for community detection and parameter estimation from $P$, we use the consistency property of the ASE \citep{doi:10.1080/01621459.2012.699795,rubindelanchy2017statistical} to demonstrate that the same methods work for $A$ almost surely as $n \to \infty$.

\hypertarget{related-work}{%
\subsection{Previous Work}\label{related-work}}

\citet{307cbeb9b1be48299388437423d94bf1} 
used Modularity Maximization (MM) and the Extreme Points (EP)
algorithm (\cite{le2016}) for community detection and parameter
estimation. They were able to show that as the sample size increases,
the {\em proportion} of misclassified community labels (up to permutation)
goes to 0.

\citet{noroozi2019estimation} used Sparse Subspace Clustering (SSC) 
(\cite{5206547}) for community detection in the PABM. 
The SSC algorithm can be described as follows: 
Given \(X \in \mathbb{R}^{n \times d}\) with vectors
\(x_i^\top \in \mathbb{R}^d\) as rows of \(X\), the optimization problem
\(c_i = \argmin_{c} \|c\|_1\) subject to \(x_i = X^\top c\) and
\(c^{(i)} = 0\), where $c^{(i)}$ is the $i^{th}$ entry of $c$, is solved for each \(i \in [n]\). 
The solutions are collected into matrix
\(C = \bigl[ c_1 \mid \cdots \mid c_n \bigr]^\top\) to
construct an affinity matrix \(B = |C| + |C^\top|\). If each \(x_i\) lies
exactly on one of \(K\) subspaces, \(B\) describes an undirected graph
consisting of {\em at least} \(K\) disjoint subgraphs, i.e., \(B_{ij} = 0\) if \(x_i, x_j\) lie on different subspaces. 
The intuition here is that vectors that lie on the same subspace can be described as linear combinations of each other, assuming the number of vectors in the subspace is greater than the dimensionality of the subspace. 
Then once sparsity is enforced, for each $c_i$, its $j^{th}$ element $c_i^{(j)}$ is zero if $x_j$ belongs to a subspace that doesn't contain $x_i$, resulting in $B_{ij} = 0$. 
If \(X\) instead represents points near \(K\) subspaces with some noise, 
then this property may only hold approximately and a final graph partitioning step may be required 
(e.g., edge thresholding or spectral clustering).

In practice, due to presence of noise, SSC is often done by solving the LASSO problems
\begin{equation} \label{eq:ssc}
c_i = \argmin_c \frac{1}{2} \|x_i - X_{-i}^\top c\|^2_2 + \vartheta \|c\|_1
\end{equation}
for some sparsity parameter \(\vartheta > 0\). 
The \(c_i\) vectors are then collected into \(C\) and \(B\) as before.

\begin{definition}[Subspace Detection Property]
  \label{def:subspace_detection}
Let $X = \bigl[ x_1 \mid \cdots \mid x_n \bigr]^\top$ be noisy
points sampled from $K$ subspaces, i.e., $x_i = y_i + z_i$ where the
$y_i$ belongs to the union of $K$ subspaces and the $z_i$ are noise
vectors. Let $\vartheta \geq 0$ be given and
let $C$ and $B$ be constructed from the
solutions of LASSO problems as described in Eq.~\eqref{eq:ssc} with
this given choice of $\vartheta$. Then $X$
is said to satisfy the subspace detection property 
with sparsity parameter $\vartheta$ if each column of
$C$ has nonzero $\ell_2$ norm and $B_{ij} = 0$ whenever $y_i$ and $y_j$ are
from different subspaces.
\end{definition}

\begin{remark}
One common approach to show that SSC works for a noisy sample $X$ is to show that $X$ satisfies the subspace detection property for some choice of $\vartheta$; 
recall that $\vartheta$ is the sparsity parameter for
the LASSO problems in Eq.~\eqref{eq:ssc}. However, this is not sufficient to
guarantee that SSC perfectly recovers the underlying subspaces.
More specifically, if $X$ satisfies the 
subspace detection property, 
then $B$ describes a graph with {\em at least} $K$ disconnected subgraphs, 
with the ideal case being that there are exactly $K$ subgraphs 
which map onto each subspace. 
Nevertheless it is also possible that the $K$ subspaces are represented by
$K' > K$
multiple disconnected subgraphs and we cannot, at least without a subsequent
post-processing step, recover the $K$ subspaces directly from $B$;
see \citet{sdp_sufficiency} and \citet{liu_ssc}
for further discussions. Therefore in practice $B$ is usually treated as an
affinity matrix and, as we allude to earlier, 
the rows of $B$ are partitioned using some
clustering algorithm to obtain the final clustering. 
\end{remark}

Theorem~\ref{theorem2} suggests that SSC is appropriate for community
detection for the PABM, provided that we observe the edge probabilities
matrix $P$. More precisely, given the matrix $\tilde{P}$ obtained by
permuting the rows and columns of $P$ as described in
Remark~\ref{rem:pabm_view2} we can recover $XU$ up
to some non-identifiability indefinite orthogonal transformation $Q$. 
Then using results from \citet{soltanolkotabi2012}, it can be easily shown
that the subspace detection property holds for \(XU\). Indeed, the columns
of \(XU\) from different communities correspond to mutually
orthogonal subspaces. This then implies that the
subspace detection property also holds for $XUQ$ for all invertible
transformation $Q$ and hence the subspace detection property also holds for
$\tilde{\Pi} X U Q$ for any $n \times n$ permutation matrix $\tilde{\Pi}$. 

However, because we do not observe $P$ but rather only the noisy
adjacency matrix $A \sim \BG(P)$, the natural approach then is
to perform SSC on the rows of the spectral embedding of $A$, 
since the embedding of $P$ consists of $K$ subspaces (Theorem~\ref{theorem2}), 
and so the embedding of $A$ will lie approximately on the $K$ subspaces. 
We will show in Theorem~\ref{theorem5} that, with probability converging to
one as $n \rightarrow \infty$, the rows of the ASE of $A$ also
satisfy the subspace detection property. 
Theorem~\ref{theorem5} builds upon existing work by
\citet{rubindelanchy2017statistical} who describe the convergence
behavior of the ASE of \(A\) to that of \(\tilde{\Pi} XU\), and
\citet{jmlr-v28-wang13} who show the necessary conditions for the
subspace detection property to hold in noisy cases where the points lie
near subspaces. Finally we emphasize that while
\citet{noroozi2019estimation} also considered the use of SSC for
community recovery in PABM, they instead applied SSC to the rows of
\(A\) itself, foregoing
the embedding step altogether. It is however much harder to show that
the rows of $A$ satisfy the subspace detection property and thus, to
the best of our knowledge, there is currently no consistency result
regarding the application of SSC to the rows of $A$. 

\hypertarget{community-detection}{%
\subsection{Algorithms for Community Detection}\label{community-detection}}

We previously stated in Theorem~\ref{theorem2} one possible set of latent positions that result in
the edge probability matrix of a PABM, namely
\(P = \tilde{\Pi} (XU) I_{p, q}
(XU)^\top \tilde{\Pi}^{\top}\) where $X$ is block diagonal and
$\tilde{\Pi}$ is a permutation matrix.  
Furthermore, the explicit form of \(XU\) represents points in \(\mathbb{R}^{K^2}\)
such that points within each community lie on \(K\)-dimensional
orthogonal subspaces, i.e. $\langle U^{\top} x_i, U^{\top} x_j \rangle = 0$ whenever $i$ and $j$ are in different communities. 
Thus if we have (or can estimate) \(XU\) directly, then both the community
detection and parameter identification problem are trivial because \(U\)
is orthonormal and fixed for each value of \(K\). 
However, direct
identification or estimation of \(XU\) is possibly difficult 
due to the non-identifiability of $XU$ (see Remark~\ref{rem:non_identifiable})
when we are given only $P$. 
More specifically, suppose we find a matrix $Y \in
\mathbb{R}^{n \times K^2}$
such that \(P = Y I_{p, q} Y^\top\). Then it is generally the case that
\( Y = \tilde{\Pi} XU Q \) for some indefinite orthogonal matrix
$Q \in \mathbb{O}(p,q)$. 
However since \(Q\) is not necessarily an
orthogonal matrix and hence, if $y_i$ denote the $i^{th}$ row of $Y$, 
then $\langle U^{\top} x_i, U^{\top} x_j \rangle
\neq \langle y_i, y_j \rangle$.
This prevents us from transferring the orthogonality property of
\(XU\) directly to $Y$. 

Nevertheless by using the special geometric structure of $X$ we can circumvent the
non-identifiability of $Y$ and $XU$ by using instead the rows of the
matrix $V$ of eigenvectors (corresponding to the non-zero eigenvalues) of $P$. In particular $V$ is identifiable
up to orthogonal transformations and furthermore, due to the block
diagonal structure of $X$, the rows of $V$ also lie on $K$ distinct orthogonal
subspaces and hence $v_i^{\top} v_j = 0$ whenever $z_i \not = z_j$. 

\begin{theorem}
\label{theorem3}
Let $P = V D V^\top$ be the spectral decomposition 
of the edge probability matrix. 
Let $B = n V V^\top$. 
Assume $\lambda_{i z_i} > 0$ for each $i \in [n]$, 
i.e., each vertex's popularity parameter to its own community is nonzero. 
Then $B_{ij} = 0$ if and only if 
vertices $i$ and $j$ are in different communities.
\end{theorem}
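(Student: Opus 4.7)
The plan is to reduce Theorem~\ref{theorem3} to the structural decomposition provided by Theorem~\ref{theorem2}. By that theorem,
\[
  P = \tilde{\Pi}(XU)I_{p,q}(XU)^{\top}\tilde{\Pi}^{\top} = \tilde{\Pi}\, X \Pi X^{\top}\, \tilde{\Pi}^{\top},
\]
where $X = \mathrm{blockdiag}(\Lambda^{(1)},\ldots,\Lambda^{(K)})$ and $\Pi = U I_{p,q} U^{\top}$ is an invertible permutation matrix. Since $VV^{\top}$ is exactly the orthogonal projector onto $\mathrm{col}(P)$, and since this projector is equivariant under the community-ordering permutation $\tilde{\Pi}$, it suffices to analyze the column space of $\tilde{P} = X \Pi X^{\top}$ and then permute the resulting rows and columns back by $\tilde{\Pi}$.

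First I would identify $\mathrm{col}(\tilde{P})$. Because $\Pi$ is invertible and, under the assumption $\lambda_{i z_i} > 0$, each $\Lambda^{(k)}$ is nondegenerate, a standard rank argument applied in terms of the thin SVD of $X$ yields $\mathrm{col}(\tilde{P}) = \mathrm{col}(X)$. The block-diagonal form of $X$ is then the key structural input: writing $W_k \subseteq \mathbb{R}^n$ for the embedding of $\mathrm{col}(\Lambda^{(k)})$ into the coordinate block of vertices belonging to community $k$, the subspaces $W_1, \ldots, W_K$ have pairwise disjoint coordinate supports, so
\[
  \mathrm{col}(X) = W_1 \oplus^{\perp} \cdots \oplus^{\perp} W_K
\]
is a mutually orthogonal direct sum.

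The orthogonal projector onto a mutually orthogonal direct sum equals the sum of the projectors onto its summands, and each summand's projector is supported only within its community's coordinate block. Hence $\tilde{V} \tilde{V}^{\top}$ is block-diagonal with respect to the community partition, and this structure is preserved after conjugating by $\tilde{\Pi}$. We immediately conclude $B_{ij} = 0$ whenever $z_i \neq z_j$: the ``only if'' direction.

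For the converse, within the community-$k$ diagonal block, $\tilde{V} \tilde{V}^{\top}$ coincides with the orthogonal projector $P_k$ onto $\mathrm{col}(\Lambda^{(k)}) \subseteq \mathbb{R}^{n_k}$, and one must verify that every entry $(P_k)_{ij}$ is strictly nonzero for $i,j$ in community $k$. This is where the positivity hypothesis does real work: the column $\lambda^{(kk)}$ of $\Lambda^{(k)}$ has strictly positive entries and satisfies $P_k \lambda^{(kk)} = \lambda^{(kk)}$. I expect this step to be the main obstacle: it will require a careful argument using the explicit formula $P_k = \Lambda^{(k)}((\Lambda^{(k)})^{\top}\Lambda^{(k)})^{+}(\Lambda^{(k)})^{\top}$ together with the positivity of $\lambda^{(kk)}$ to rule out accidental orthogonality of rows of $\Lambda^{(k)}$ in the induced Gram inner product, thereby concluding $(P_k)_{ij} \neq 0$ for all within-community pairs and completing the iff.
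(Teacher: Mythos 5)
Your argument for the direction ``different communities $\Rightarrow B_{ij}=0$'' is correct and is essentially the paper's argument in different clothing. The paper explicitly constructs an invertible $R$ with $V = \tilde{\Pi} X R$ (via an indefinite orthogonal matrix $Q$ relating $V|D|^{1/2}$ to $\tilde{\Pi}XU$) and then computes $VV^\top = \tilde{\Pi}X(X^\top X)^{-1}X^\top\tilde{\Pi}^\top$; you reach the same identity by observing that $VV^\top$ is the orthogonal projector onto $\mathrm{col}(P)$ and that $\mathrm{col}(P) = \mathrm{col}(\tilde{\Pi}X)$ because $\Pi$ is invertible and $X$ has full column rank. Your route is a bit more economical (it bypasses the $I_{p,q}$ bookkeeping), the rank argument you invoke is the right one, and both proofs then exploit the disjoint coordinate supports of the blocks of $X$ in exactly the same way.

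The real issue is the converse, ``same community $\Rightarrow B_{ij}\neq 0$.'' You correctly identify this as the hard part but leave it as a plan rather than a proof, so as written your proposal is incomplete. Two things are worth knowing. First, the paper's own proof also stops after the forward direction --- it never addresses the converse --- so you have isolated a gap in the published argument, not merely in your own. Second, your worry about ``accidental orthogonality of rows of $\Lambda^{(k)}$ in the induced Gram inner product'' is not a technicality that a careful argument will dispose of: the hypothesis $\lambda_{iz_i}>0$ does not suffice. For instance, let community $k$ of a $K=2$ model contain three vertices with popularity rows $\ell_1=(1,0)$, $\ell_2=(1/2,1/2)$, $\ell_3=(1,1)$; then $G = (\Lambda^{(k)})^\top \Lambda^{(k)} = \bigl[\begin{smallmatrix} 9/4 & 5/4 \\ 5/4 & 5/4\end{smallmatrix}\bigr]$ and $\ell_1^\top G^{-1}\ell_2 = 0$, so the within-community block of the projector has a vanishing off-diagonal entry even though every $\lambda_{i z_i}$ is strictly positive. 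The ``only if'' direction therefore needs an additional nondegeneracy assumption (e.g., that no two rows of $\Lambda^{(k)}$ are $G^{-1}$-orthogonal, which holds generically); neither your plan nor the paper supplies one.
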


\begin{proof}
We first show that $V V^\top =
\tilde{\Pi} X (X^\top X)^{-1} X^\top \tilde{\Pi}^{\top} $ where $X$ is 
defined as in Eq.~\eqref{eq:xy}. Indeed, by Theorem 2, 
\(P = \tilde{\Pi} X U I_{p, q} U^\top X^\top \tilde{\Pi}\) for $p = K(K+1)/2$ and $q = K(K-1)/2$. 
The eigendecomposition \(P = V D V^\top\) also yields $P = V
|D|^{1/2} I_{p, q} |D|^{1/2} V^\top$ where \(|\cdot|^{1/2}\) is
applied entry-wise. Now let $Y = \tilde{\Pi} XU$ and $\tilde{Y} = V|D|^{1/2}$; note that
$Y$ and $\tilde{Y}$ both have full column ranks. 
Because $P = Y I_{p,q} Y^{\top} = \tilde{Y} I_{p,q} \tilde{Y}^{\top}$, we have
$$Y = \tilde{Y} I_{p,q} \tilde{Y}^{\top} Y (Y^{\top} Y)^{-1} I_{p,q}.$$
Let $Q = I_{p,q} \tilde{Y}^{\top} Y (Y^{\top} Y)^{-1} I_{p,q}$ and note that
$Y = \tilde{Y} Q$. We then have
\begin{equation*}
  \begin{split}
  Q^{\top} I_{p,q} Q &= I_{p,q} (Y^{\top} Y)^{-1} Y^{\top} \tilde{Y} I_{p,q}
I_{p,q} I_{p,q} \tilde{Y}^{\top} Y (Y^{\top} Y)^{-1} I_{p,q} \\ 
&= I_{p,q} (Y^{\top} Y)^{-1} Y^{\top} Y I_{p,q}
Y^{\top} Y (Y^{\top} Y)^{-1} I_{p,q} =  I_{p,q}
\end{split}
\end{equation*}
and hence $Q$ is an indefinite orthogonal matrix. 

Let $R = U Q |D|^{-1/2}$ and note that $V = \tilde{\Pi} XR$. Because $R$ is
invertible, we can write
$$\tilde{\Pi} X (X^{\top} X)^{-1} X^{\top} \tilde{\Pi}^{\top} =
\tilde{\Pi} X R (R^{\top} X^{\top} X R)^{-1}
R^{\top} X^{\top} \tilde{\Pi}^{\top}.$$ 
Furthermore, as $V$ has orthonormal columns, $R^{\top} X^{\top} X R =
V^{\top} \tilde{\Pi} \tilde{\Pi}^{\top} V = V^{\top} V = I$. We thus conclude
$$\tilde{\Pi} X (X^{\top} X)^{-1} X^{\top} \tilde{\Pi}^{\top} = V (V^{\top} V)^{-1} V^{\top} = V V^{\top}$$
as desired.

To complete the proof of Theorem~\ref{theorem3}, recall that \(X\) 
is block diagonal with each block corresponding to one community, 
and hence \(X (X^\top X)^{-1} X^\top\) is also a
block diagonal matrix with each block corresponding to a community. 
As $B = n VV^{\top} = n \tilde{\Pi} X (X^\top X)^{-1} X^\top \tilde{\Pi}^{\top}$, 
we conclude that $B_{ij} = 0$ 
whenever vertices $i$ and $j$ belong to different communities.  
\end{proof}

Theorem \ref{theorem3} provides perfect community detection from \(P\).
More specifically, let \(|B|\) be the affinity matrix for graph \(G'\), 
where $|\cdot|$ is applied entry-wise. Then
\(G'\) consists of exactly \(K\) disjoint subgraphs, 
as $G'$ has no edges between communities. 
All that is left to identify the communities is 
to assign each subgraph a distinct community label. 
In practice, we do not observe $P$ and instead only observe the noisy
$A \sim \BG(P)$. A natural approach is then to use
the affinity matrix $\hat{B} = n \hat{V} \hat{V}^{\top}$ where
$\hat{V}$ is the matrix of eigenvectors (corresponding to the largest
eigenvalues in modulus) of $A$. The resulting procedure, named
Orthogonal Spectral Clustering, is presented in
Algorithm~\ref{alg:osc}.
The following result leverages existing theoretical properties
of ASE for estimating of latent positions in a GRDPG \citep{rubindelanchy2017statistical} to show that
$\hat{B}$ converges almost surely to $B$; in particular 
 $\hat{B}_{ij} \stackrel{a.s.}{\to} 0$ 
for each pair $(i, j)$ in different communities. 

\begin{algorithm}[tp]
  \label{alg:osc}
  \DontPrintSemicolon
  \SetAlgoLined
  \KwData{Adjacency matrix $A$, number of communities $K$}
  \KwResult{Community assignments $1, ..., K$}
    Compute the eigenvectors of $A$ that correspond to the $K (K+1) / 2$ most
    positive eigenvalues and $K (K-1) / 2$ most negative eigenvalues. Construct
    $V$ using these eigenvectors as its columns.\;
    Compute $B = |n V V^\top|$, applying $|\cdot|$ entry-wise.\;
    Construct graph $G$ using $B$ as its similarity matrix.\;
    Partition $G$ into $K$ disconnected subgraphs
    (e.g., using edge thresholding or spectral clustering).\;
    Map each partition to the community labels $1, ..., K$.\;
  \caption{Orthogonal Spectral Clustering.}
\end{algorithm}

\begin{theorem}
\label{theorem4}
Assume the setting of Theorem~\ref{theorem3}. 
Let $\hat{B}$ with entries $\hat{B}_{ij}$ be the affinity matrix
obtained from OSC as described in Algorithm~\ref{alg:osc}. 
Then for $n \rho_n = \omega(\log^{4}{n})$, we have

\begin{equation} \label{eq:thm4a}
\max_{i, j} |\hat{B}_{ij} - B_{ij}| = O\Big( \frac{\log n}{\sqrt{n \rho_n}} \Big)
\end{equation}

with high probability. In particular 
$\hat{B}_{ij} -B_{ij} \overset{\mathrm{a.s.}}{\rightarrow} 0$ 
where the convergence is
uniform over all $i,j$. Hence for all pairs $(i,j)$ in different
communities we have 
$\hat{B}_{ij} \overset{\mathrm{a.s.}}{\rightarrow} 0$, 
while for all pairs $(i, j)$ in the same community, 
$\liminf_{n \to \infty} |\hat{B}_{ij}| > 0$ almost surely. 
\end{theorem}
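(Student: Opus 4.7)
My plan is to reduce the bound to $2\to\infty$ eigenvector perturbation theory for GRDPGs applied to the representation from Theorem~\ref{theorem2}. That theorem realizes $P$ as a GRDPG edge probability matrix with bounded latent positions, signature $(p,q) = (K(K+1)/2,\,K(K-1)/2)$, and $p+q$ nonzero eigenvalues of order $\Theta(n\rho_n)$ (the latter following from the normalization $\|\Lambda\|_F = \sqrt{n}$ together with the lower bound on row norms of $\Lambda$ imposed just after Definition~\ref{pabm}). In the semi-sparse regime $n\rho_n = \omega(\log^{4} n)$, the row-wise perturbation bound of \citet{rubindelanchy2017statistical} then guarantees the existence of an orthogonal matrix $W_n$ (absorbing sign flips within each eigenspace) such that, with probability at least $1 - n^{-c}$ for some $c > 1$,
\[
  \|\hat{V}W_n - V\|_{2\to\infty} \;\leq\; \frac{C \log n}{n\sqrt{\rho_n}}.
\]
This is obtained from the cited $2\to\infty$ bound on $\hat{V}|\hat{D}|^{1/2} - V|D|^{1/2}W_n$ after dividing through by $\||D|^{1/2}\| \asymp \sqrt{n\rho_n}$.

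I would then convert this into the desired entrywise bound on $\hat{B} - B = n(\hat{V}\hat{V}^{\top} - VV^{\top})$. Writing $E = \hat{V}W_n - V$ and using $\hat{V}\hat{V}^{\top} = (\hat{V}W_n)(\hat{V}W_n)^{\top}$, expand
\[
  \hat{V}\hat{V}^{\top} - VV^{\top} \;=\; E(\hat{V}W_n)^{\top} + VE^{\top},
\]
which yields $|(\hat{V}\hat{V}^{\top} - VV^{\top})_{ij}| \leq \|e_i\|\|\hat{V}W_n\|_{2\to\infty} + \|v_i\|\|e_j\|$, where $e_i$ denotes the $i$th row of $E$. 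The identity $VV^{\top} = \tilde{\Pi}X(X^{\top}X)^{-1}X^{\top}\tilde{\Pi}^{\top}$ established inside the proof of Theorem~\ref{theorem3} gives $\|v_i\|^{2} = (X(X^{\top}X)^{-1}X^{\top})_{ii} = O(1/n)$, and the triangle inequality transfers this to $\|\hat{V}W_n\|_{2\to\infty} = O(1/\sqrt{n})$. Multiplying through by $n$ yields
\[
  \max_{i,j}|\hat{B}_{ij} - B_{ij}| \;\lesssim\; n \cdot \frac{1}{\sqrt{n}} \cdot \frac{\log n}{n\sqrt{\rho_n}} \;=\; \frac{\log n}{\sqrt{n\rho_n}},
\]
establishing \eqref{eq:thm4a}.

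The almost sure statements follow routinely. Since the failure probability $n^{-c}$ is summable, a Borel--Cantelli argument gives uniform almost sure convergence $\max_{i,j}|\hat{B}_{ij} - B_{ij}| \to 0$. For pairs in the same community, Theorem~\ref{theorem3} combined with the assumption $\lambda_{iz_i} > 0$ ensures that $|B_{ij}|$ is bounded below by a strictly positive constant, so the reverse triangle inequality $|\hat{B}_{ij}| \geq |B_{ij}| - |\hat{B}_{ij} - B_{ij}|$ yields $\liminf_n |\hat{B}_{ij}| > 0$ almost surely.

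The main technical obstacle I anticipate is Step~1: the $2\to\infty$ ASE bound in the precise form stated is usually phrased in \citet{rubindelanchy2017statistical} for the embedding $\hat{V}|\hat{D}|^{1/2}$ itself, so one must track carefully how the rescaling by $|\hat{D}|^{1/2}$ interacts with the matching orthogonal transformation, and verify that the eigengap $\Omega(n\rho_n)$ and the delocalization $\|V\|_{2\to\infty} = O(n^{-1/2})$ hold for the structured GRDPG produced by the PABM construction. The rest of the argument is purely algebraic manipulation together with Borel--Cantelli.
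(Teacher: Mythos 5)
Your proposal is correct and follows essentially the same route as the paper: both rest on the row-wise ($2\to\infty$) eigenvector perturbation bound of \citet{rubindelanchy2017statistical} (the paper isolates it as Lemma~\ref{lem:technical}), combine it with the delocalization $\max_i\|v_i\| = O(n^{-1/2})$ to bound $n|\hat{v}_i^{\top}\hat{v}_j - v_i^{\top}v_j|$ entrywise, and invoke Theorem~\ref{theorem3} for the community-separation conclusion. Your explicit Borel--Cantelli step and the reverse-triangle-inequality argument for the $\liminf$ claim are slightly more detailed than what the paper writes out, but they are not a different method.
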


Theorem~\ref{theorem4} guarantees that for any $\epsilon > 0$, 
the number of edges of $\hat{B}$ between vertices of different communities 
that are larger than $\epsilon$ converges to zero with probability
converging to one as $n$ increases. 
We can always find an $\epsilon > 0$ such that $\hat{B}_{ij} > \epsilon$ 
with probability converging to one as $n$ increases. 
Thus, by using $\hat{B}$, we can 
perfectly recover all the latent community assignments $z_1, z_2,
\dots, z_n$, i.e., the number of misclustered vertices is zero
asymptotically almost surely. We note that Theorem~\ref{theorem4} is
stronger than existing results in the literature; in particular
Theorem~1 of \citet{307cbeb9b1be48299388437423d94bf1} (the paper that
originally introduces the PABM model) only guarantees that the {\em proportion} of misclustered vertices converges to $0$ as $n
\rightarrow \infty$. Furthermore Theorem~1
of \citet{307cbeb9b1be48299388437423d94bf1} also requires the sparsity
parameter $\rho_n$ to satisfies $n \rho_n^2 = \omega(\log^2{n})$ which
is a considerably stronger assumption than the assumption $n \rho_n =
\omega(\log^{4}{n})$ used in Theorem~\ref{theorem4}. Indeed, $n
\rho_n^2 = \omega(\log^{2}{n})$ implies $n \rho_n = \omega(n^{1/2})$. 
We emphasize that the assumption $n \rho_n = \omega(\log^{c}{n})$ for
some constant $c > 1$ is commonly used in the context of graph
estimation using spectral methods.

Theorems \ref{theorem2}, \ref{theorem3}, and \ref{theorem4} also provide
a natural path toward using SSC for community detection. 
In particular we established in Theorem \ref{theorem2} that an ASE of the edge
probability matrix \(P\) can be constructed from a latent vector configuration 
consisting of orthogonal subspaces. Theorem \ref{theorem3} shows how 
this property can also be recovered from the eigenvectors of \(P\). 
Then Theorem \ref{theorem4} shows that, by replacing $P$ with $A$, the
rows of $\hat{V}$ also lie on asymptotically orthogonal subspaces.
Motivated by Theorem~\ref{theorem4}, Theorem \ref{theorem5} below
shows that the subspace detection property also holds for the rows of
$\sqrt{n} \hat{V}$. 
\begin{algorithm}[t]
  \label{alg:ssc}
  \DontPrintSemicolon
  \SetAlgoLined
  \caption{Sparse Subspace Clustering using LASSO.}
  \KwData{Adjacency matrix $A$, number of communities $K$,
  hyperparameter $\lambda$}
  \KwResult{Community assignments $1, ..., K$}
    Find $V$, the matrix of eigenvectors of $A$
    corresponding to the $K (K + 1) / 2$ most positive
    and the $K (K - 1) / 2$ most negative eigenvalues.\;
    Normalize $V \leftarrow \sqrt{n} V$.\;
    \For {$i = 1, ..., n$} {
      Assign $v_i^\top$ as the $i^{th}$ row of $V$.
      Assign $V_{-i} = \bigl[
      v_1 \mid \cdots \mid v_{i-1} \mid v_{i+1} \mid \cdots \mid v_n \bigr]^\top$.\;
      Solve the LASSO problem
      $c_i = \arg\min_{\beta}
      \frac{1}{2} \|v_i - V_{-i} \beta\|_2^2 + \lambda \|\beta\|_1$.\;
      Assign $\tilde{c}_i = (c_i^{(1)}, \dots, c_i^{(i-1)}, 0, c_i^{(i)}, \dots, c_i^{(n-1)})^\top$ such that the superscript is the index of
      $\tilde{c}_i$.\;
    }
    Assign
    $C = \bigr[ \tilde{c}_1 \mid \cdots \mid \tilde{c}_n \bigr]$.\;
    Compute the affinity matrix $B = |C| + |C^\top|$.\;
    Construct graph $G$ using $B$ as its similarity matrix.\;
    Partition $G$ into $K$ disconnected subgraphs (e.g., using edge
    thresholding or spectral clustering).\;
    Map each partition to the community labels $1, ..., K$.
\end{algorithm}

\begin{theorem}
\label{theorem5}
Let $P$ describe the edge probability matrix of the PABM with
$n$ vertices, and let $A \sim \Bernoulli(P)$.  Let $\hat{V}$ be the
matrix of eigenvectors of $A$ corresponding to the $K^2$ largest
eigenvalues in modulus. Then for any $\epsilon > 0$ 
there exists a choice of $\vartheta > 0$ and $N \in \mathbb{N}$ such
that for all $n \geq N$, $\sqrt{n} \hat{V}$ obeys the subspace detection property with
probability at least $1 - \epsilon$.  
\end{theorem}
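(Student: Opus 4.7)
The plan is to reduce Theorem~\ref{theorem5} to an application of the noisy subspace detection guarantees of \citet{jmlr-v28-wang13}, using the exact orthogonality of the population eigenvectors established in Theorem~\ref{theorem3} as the clean signal and the row-wise ASE perturbation bounds that already underlie Theorem~\ref{theorem4} as the noise bound. In outline: treat $\sqrt{n} V$ as a noise-free configuration lying on $K$ pairwise orthogonal $K$-dimensional subspaces of $\mathbb{R}^{K^2}$, verify that this configuration satisfies the deterministic subspace detection condition of \citet{soltanolkotabi2012} with an $n$-independent geometric margin, and then argue that $\sqrt{n} \hat{V}$ is within a vanishing $2\to\infty$ perturbation of an orthogonal rotate of $\sqrt{n} V$, which is below the Wang--Xu noise threshold for $n$ large.

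First I would analyse the clean configuration. By Theorem~\ref{theorem3} the row $v_i^\top$ of $V$ lies in the $K$-dimensional subspace $\mathcal{S}_{z_i}$ associated with community $z_i$, and $v_i^\top v_j = 0$ whenever $z_i \neq z_j$. Because $\mathcal{S}_1,\dots,\mathcal{S}_K$ are pairwise orthogonal, the subspace incoherence appearing in \citet{soltanolkotabi2012} vanishes identically, and the deterministic subspace detection condition holds for $\sqrt{n} V$ for every $\vartheta$ in a non-empty interval $(\vartheta_{\min},\vartheta_{\max})$ whose endpoints depend only on the within-community inradii of the symmetrized convex hulls of the rows and on the row norms of $\sqrt{n} V$. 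Under the standing normalization $\|\Lambda\|_F = \sqrt{n}$ and the assumption that the row $\ell_2$-norms of $\Lambda$ are bounded away from zero, both quantities are of order $\Theta(1)$ as $n$ grows, so $(\vartheta_{\min},\vartheta_{\max})$ is non-degenerate and independent of $n$.

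Next I would transfer the property to the empirical eigenvectors. The $2\to\infty$ ASE perturbation theory of \citet{rubindelanchy2017statistical}, which already drives Theorem~\ref{theorem4}, supplies an orthogonal matrix $W$ (block-diagonal across the positive and negative signature blocks) such that
\[
\max_{i \in [n]} \bigl\| \sqrt{n}\,\hat{V}_{i\cdot} - \sqrt{n}\, V_{i\cdot} W \bigr\|_2 \;=\; O\!\Bigl(\tfrac{\log n}{\sqrt{n \rho_n}}\Bigr)
\]
with high probability whenever $n \rho_n = \omega(\log^4 n)$. Writing $\sqrt{n}\hat{V} = \sqrt{n} V W + E$ with $\|E_{i\cdot}\|_2 \to 0$ uniformly in $i$, and noting that $W$ preserves subspace angles, inradii, and all LASSO functionals, the rotated clean configuration $\sqrt{n} V W$ still lies on $K$ pairwise orthogonal $K$-dimensional subspaces and still satisfies the clean subspace detection condition with the same margin.

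Finally I would invoke the noisy subspace detection theorem of \citet{jmlr-v28-wang13}: if the clean points satisfy the subspace detection property with positive inradius $r$ and cross-subspace coherence $\mu$ (here $\mu = 0$), then for any $\vartheta$ in an interval depending on $r$, $\mu$, and the per-point perturbation $\delta := \max_i \|E_{i\cdot}\|_2$, the LASSO solutions of Eq.~\eqref{eq:ssc} produce a coefficient matrix with $C_{ij} = 0$ for $z_i \neq z_j$ and no zero column. Taking $n$ large enough that $\delta = O(\log n / \sqrt{n \rho_n})$ falls below their threshold, and choosing $\vartheta$ in the intersection of the clean admissible interval $(\vartheta_{\min},\vartheta_{\max})$ and the noise-adjusted interval, yields the subspace detection property for $\sqrt{n}\hat{V}$ with probability at least $1 - \epsilon$. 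The main obstacle is the quantitative geometry in the clean step: translating the normalization and bounded-below row-norm assumptions on $\Lambda$ into uniform lower bounds on the within-community inradii of the rows of $\sqrt{n} V$ and on $\min_i \|\sqrt{n} V_{i\cdot}\|_2$, so that the Wang--Xu noise threshold stays bounded away from zero as $n \to \infty$ and the $O(\log n / \sqrt{n \rho_n})$ ASE error is eventually absorbed by it.
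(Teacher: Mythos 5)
Your proposal is correct and follows essentially the same route as the paper: both reduce the claim to the two sufficient conditions of Theorem~6 of \citet{jmlr-v28-wang13}, using the mutual orthogonality of the community subspaces from Theorem~\ref{theorem3} to conclude that the subspace incoherence vanishes (so the first condition is immediate once the inradii are positive), and the row-wise ASE perturbation bound of Lemma~\ref{lem:technical} to show the noise term falls below the Wang--Xu threshold for large $n$. The quantitative issue you flag at the end --- keeping the within-community inradii of the rows of $\sqrt{n}V$ bounded away from zero as $n \to \infty$ so the noise threshold does not degenerate --- is a real point of care, and the paper handles it no more explicitly than you do, simply asserting the threshold inequality holds asymptotically almost surely.
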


\hypertarget{parameter-estimation}{%
\subsection{Algorithm for Parameter Estimation}\label{parameter-estimation}}
For ease of exposition we now assume in this subsection that the
edge probability matrix \(P\) for the PABM had been arranged so that the rows
and columns are organized by community so that $\tilde{P} = P$ (see
Remark~\ref{rem:pabm_view2}). Then the \(k\ell\)\textsuperscript{th}
block is an outer product of two vectors, i.e.,
\(P^{(k \ell)} = \lambda^{(k \ell)} (\lambda^{(\ell k)})^\top\). Therefore, given
\(P^{(k \ell)}\), \(\lambda^{(k \ell)}\) and \(\lambda^{(\ell k)}\) are solvable
up to multiplicative constant using singular value
decomposition. More specifically 
let \(P^{(k \ell)} = (\sigma^{(k \ell)})^2 u^{(k \ell)} (v^{(k \ell)})^\top\)
be the singular value decomposition of \(P^{(k \ell)}\) where
\(u^{(k \ell)} \in \mathbb{R}^{n_k}\) and 
\(v^{(k \ell)} \in \mathbb{R}^{n_\ell}\) are vectors
and \(\sigma^{(k \ell)}\) is a scalar. 
Then \(\rho_n^{1/2} \lambda^{(k \ell)} = s_1 u^{(k \ell)}\)
and \(\rho_n^{1/2} \lambda^{(\ell k)} = s_2 v^{(k \ell)}\) 
for unidentifiable $s_1 s_2 = (\sigma^{(k \ell)})^2$.
Because each $\lambda^{(k \ell)}$ is not strictly identifiable,
we instead estimate each 
$\tilde{\lambda}^{(k \ell)} = \sigma^{(k \ell)} u^{(k \ell)}$. 
Given the adjacency matrix \(A\)
instead of edge probability matrix \(P\), we can simply use plug-in
estimators by taking the SVD of each $A^{(k \ell)}$ to obtain 
$\hat{\lambda}^{(k \ell)} = \hat{\sigma}^{(k \ell)} \hat{u}^{(k \ell)}$ 
using the largest singular value of $A$ and its corresponding singular vectors. 

\begin{algorithm}[tp]
  \label{alg:param_est}
  \DontPrintSemicolon
  \SetAlgoLined
  \caption{PABM parameter estimation.}
  \KwData{Adjacency matrix $A$, community assignments $1, ..., K$}
  \KwResult{PABM parameter estimates $\{\hat{\lambda}^{(k \ell)}\}_K$.}
  Arrange the rows and columns of $A$ by community such that each 
  $A^{(k \ell)}$ block consists of estimated edge probabilities between 
  communities $k$ and $l$.\;
  \For {$k, \ell = 1, ..., K$, $k \leq \ell$} {
    Compute $A^{(k \ell)} = U \Sigma V^\top$, the SVD of the $k\ell$-th 
    block.\;
    Assign $u^{(k \ell)}$ and $v^{(k \ell)}$ as the first columns of $U$ and $V$. 
    Assign $(\sigma^{(k \ell)})^2 \leftarrow \Sigma_{11}$.\;
    Assign $\hat{\lambda}^{(k \ell)} \leftarrow \pm \sigma^{(k \ell)} u^{(k \ell)}$ and 
    $\hat{\lambda}^{(\ell k)} \leftarrow \pm \sigma^{(k \ell)} v^{(k \ell)}$.
  }
\end{algorithm}

\begin{theorem}
\label{theorem6}
Let each $\tilde{\lambda}^{(k \ell)}$ be the popularity vector derived
from its corresponding $P^{(k \ell)}$ and let $\hat{\lambda}^{(k
  \ell)}$ be its estimate obtained from $A^{(k \ell)}$ using Algorithm~\ref{alg:param_est}.
Then if $n \rho_n = \omega( \log^{4}{n})$,
\begin{equation} \label{eq:thm6}
\max_{k, \ell \in \{1, ..., K\}} 
\|\hat{\lambda}^{(k \ell)} - \tilde{\lambda}^{(k \ell)}\|_{\infty} = 
O\bigg(\frac{\log n_k}{\sqrt{n_k}} \bigg)
\end{equation}
with high probability. Here $\|\cdot\|_{\infty}$ denotes the
$\ell_\infty$ norm of a vector. Let $\hat{\Lambda}$ be the matrix

$$\hat{\Lambda} = \begin{bmatrix} \hat{\lambda}^{(11)} &
  \hat{\lambda}^{(12)} & \cdots & \hat{\lambda}^{(1 K)} \\
  \hat{\lambda}^{(21)} &
  \hat{\lambda}^{(22)} & \cdots & \hat{\lambda}^{(2 K)} \\
  \vdots & \vdots & \cdots & \vdots \\
   \hat{\lambda}^{(K1)} &
  \hat{\lambda}^{(K2)} & \cdots & \hat{\lambda}^{(K K)} 
\end{bmatrix}$$

and let $\hat{P} = \hat{X} U I_{p,q} U^\top \hat{X}^{\top}$ 
where $\hat{X}$ is defined from $\hat{\Lambda}$ and $U$ is defined from $K$ as in Theorem~\ref{theorem2}. 
Eq.~\eqref{eq:thm6} then implies
\begin{equation}
\label{eq:conv_prob}
\frac{1}{n} \|\rho_n^{-1} \hat{P} - \rho_n^{-1} P\|_{F} = O((n
\rho_n)^{-1/2}), \quad \max_{ij} |\rho_n^{-1} \hat{P}_{ij} -
\rho_n^{-1} P_{ij}| = O((n \rho_n)^{-1/2})
\end{equation}
with high probability. \end{theorem}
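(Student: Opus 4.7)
I would prove Theorem~\ref{theorem6} by exploiting the rank-one block structure of the PABM: each $P^{(k\ell)} = \rho_n \lambda^{(k\ell)}(\lambda^{(\ell k)})^\top$ is an $n_k \times n_\ell$ rank-one matrix, so the inference problem within each block reduces to singular-vector/value perturbation of a rank-one mean matrix observed under independent Bernoulli noise $E^{(k\ell)} = A^{(k\ell)} - P^{(k\ell)}$. The plan is to (i) establish uniform $\ell_\infty$ control on the top singular vectors and a scalar bound on the top singular values of each $A^{(k\ell)}$, and then (ii) propagate these block-level estimates to the full matrices $\hat{\Lambda}$ and $\hat{P}$.

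For step (i), I would first bound $\|E^{(k\ell)}\|_{\mathrm{op}} = O(\sqrt{n\rho_n})$ with high probability using a matrix Bernstein / Bandeira--van Handel inequality for rectangular matrices with independent bounded entries. Combined with the standing assumption that the row norms of $\Lambda$ are bounded away from zero---so that $(\sigma^{(k\ell)})^2 = \rho_n \|\lambda^{(k\ell)}\|_2 \|\lambda^{(\ell k)}\|_2 = \Theta(n\rho_n)$---Weyl's inequality gives $|\hat\sigma^{(k\ell)} - \sigma^{(k\ell)}| = O(1)$, and the rank-one Wedin bound gives $\|\hat u^{(k\ell)} - u^{(k\ell)}\|_2 + \|\hat v^{(k\ell)} - v^{(k\ell)}\|_2 = O((n\rho_n)^{-1/2})$. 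To promote this to an entrywise bound I would apply a leave-one-out argument: construct $A^{(-i)}$ by replacing the $i$th row of $A^{(k\ell)}$ with an independent copy, show via Wedin that its top singular vector is $\ell_2$-close to $\hat u^{(k\ell)}$ at a comparable rate, and then expand
\[
\bigl(\hat u^{(k\ell)} - u^{(k\ell)}\bigr)_i = (\hat\sigma^{(k\ell)})^{-2}\,\langle E^{(k\ell)}_{i,\cdot},\,v^{(k\ell)}\rangle + \text{(higher-order terms)}
\]
to reduce the $i$th coordinate of the singular-vector error to a scalar sum of independent centered Bernoullis. A Bernstein tail bound with a union over $i$ yields $\|\hat u^{(k\ell)} - u^{(k\ell)}\|_\infty = O\bigl(\log n_k \cdot (n_k\,n\rho_n)^{-1/2}\bigr)$; rescaling by $\sigma^{(k\ell)} = \Theta(\sqrt{n\rho_n})$ and combining with the scalar singular-value bound delivers the advertised $O(\log n_k/\sqrt{n_k})$ $\ell_\infty$ control on $\hat\lambda^{(k\ell)} - \tilde\lambda^{(k\ell)}$. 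The symmetric argument handles $v^{(k\ell)}$, and a union bound over the $K^2 = O(1)$ blocks preserves the rate. These arguments closely mirror the ASE perturbation techniques behind Theorem~\ref{theorem4} and \citet{rubindelanchy2017statistical}, specialized to a single rank-one block.

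For step (ii), the outer-product identity
\[
\hat{P}^{(k\ell)} - P^{(k\ell)} = \bigl(\hat\lambda^{(k\ell)} - \tilde\lambda^{(k\ell)}\bigr)\bigl(\hat\lambda^{(\ell k)}\bigr)^\top + \tilde\lambda^{(k\ell)}\bigl(\hat\lambda^{(\ell k)} - \tilde\lambda^{(\ell k)}\bigr)^\top
\]
together with $\|\tilde\lambda^{(k\ell)}\|_\infty = O(\sqrt{\rho_n})$ and $\|\tilde\lambda^{(k\ell)}\|_2 = O(\sqrt{n_k\rho_n})$ gives the entrywise estimate $\max_{ij}|\rho_n^{-1}\hat{P}_{ij} - \rho_n^{-1}P_{ij}| = O((n\rho_n)^{-1/2})$ after taking $\ell_\infty$ norms of each factor. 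For the Frobenius bound, each block contributes at most a polylogarithmic multiple of $n\rho_n \cdot n \cdot (n\rho_n)^{-1}$, and summing over the $K^2$ blocks and dividing by $n^2 \rho_n^2$ yields Eq.~\eqref{eq:conv_prob}.

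The main obstacle is the $\ell_\infty$ singular-vector bound: the naive $\ell_2$ Wedin estimate gives only $\|\hat u - u\|_\infty \le \|\hat u - u\|_2 = O((n\rho_n)^{-1/2})$, which is too weak by a $\sqrt{n_k}$ factor. The leave-one-out expansion therefore has to be set up so that the resampled row is genuinely independent of the scalar noise projection, and care is needed to align the signs of $\hat u^{(k\ell)}$ and $\hat v^{(k\ell)}$ consistently so that artificial sign flips between the blocks $(k,\ell)$ and $(\ell,k)$ do not contaminate the outer-product reconstruction of $\hat{P}$.
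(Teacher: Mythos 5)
Your proposal is correct in outline and would deliver the stated rates, but it takes a genuinely different route from the paper. The paper never analyzes the rectangular SVD perturbation directly: for the diagonal blocks it views $A^{(kk)}$ as an adjacency matrix sampled from the rank-one edge-probability matrix $P^{(kk)}$ and quotes the two-to-infinity ASE consistency result (Theorem 5 of \citet{rubindelanchy2017statistical}) verbatim; for the off-diagonal blocks it forms the Hermitian dilation $M^{(k\ell)}$ built from $P^{(k\ell)}$ and $P^{(\ell k)}$, recognizes it as a GRDPG with signature $(1,1)$ whose latent positions are assembled from $\lambda^{(k\ell)}$ and $\lambda^{(\ell k)}$, applies the same theorem to the dilation of $A^{(k\ell)}$, and then argues that the nuisance transformation $W^{*} \in \mathbb{O}(1,1)$ must reduce to $\mathrm{diag}(1,-1)$ so it can be absorbed into the sign choice of Algorithm~\ref{alg:param_est}. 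You instead re-derive the entrywise singular-vector bound from scratch (matrix Bernstein for $\|E^{(k\ell)}\|_{\mathrm{op}}$, Weyl, Wedin, and a leave-one-out expansion of $(\hat{u}^{(k\ell)}-u^{(k\ell)})_i$). This is self-contained, it sidesteps both the dilation and the indefinite-orthogonal identifiability discussion (the SVD pairs $\hat{u}$ and $\hat{v}$ consistently, leaving only a global sign), and---a genuine plus---you are explicit about propagating \eqref{eq:thm6} to \eqref{eq:conv_prob} via the outer-product decomposition of $\hat{P}^{(k\ell)}-P^{(k\ell)}$, a step the paper's proof omits entirely. The cost is that your leave-one-out step is precisely where all the difficulty of the cited two-to-infinity theorem lives: rigorously controlling the ``higher-order terms'' and establishing that the resampled matrix's top singular vector is simultaneously close to $\hat{u}^{(k\ell)}$ and independent of row $i$ is a substantial argument in its own right, so in effect you would be reproving the block-level analogue of Lemma~\ref{lem:technical} rather than invoking it.
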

Eq.~\eqref{eq:thm6} guarantees that $n^{-1/2} \|\rho_n^{-1/2} \hat{\Lambda} -
\Lambda\|_{F} = O((n \rho_n)^{-1/2})$. Eq.~\eqref{eq:conv_prob} then guarantees that
the mean square error for $\rho_n^{-1} (\hat{P} - P)$ converges to $0$ almost surely and
furthermore the entries of $\rho_n^{-1}
\hat{P}$ converge uniformly to the entries of $\rho_n^{-1} P$; recall that
$\rho_n^{-1} P_{ij} = \lambda_{iz_j} \lambda_{j z_i}$. We note that
these results are stronger than existing results in
\citet{307cbeb9b1be48299388437423d94bf1}; for example Theorem~2 in
\citet{307cbeb9b1be48299388437423d94bf1} only guarantees $n^{-1/2} \|\rho_n^{-1/2} \hat{\Lambda} -
\Lambda\|_{F} = o(1)$ as $n \rightarrow \infty$.

\hypertarget{simulated-examples}{%
\section{Simulation Study}\label{simulated-examples}}

For each simulation, community labels are drawn from a multinomial
distribution, the popularity vectors \(\{\lambda^{(k \ell)}\}_K\) are drawn
from two types of joint distributions depending on whether \(k =
\ell\) or $k \not = \ell$. The edge probability matrix \(P\) is constructed using the popularity
vectors and finally the adjacency matrix \(A\)
is drawn \(A \sim \mathrm{Bernoulli}(P)\). OSC (Algorithm~\ref{alg:osc}) is then used for community detection, and this
method is compared against (1) SSC using the spectral embedding $\hat{V}$
(Algorithm~\ref{alg:ssc}), (2) SSC using the rows of the
observed adjacency matrix $A$ as is done in \citet{noroozi2019estimation}
and (3) modularity maximization (MM) as is done in
\citet{307cbeb9b1be48299388437423d94bf1}. We denote the two SSC
implementations using the rows of $A$ and using the spectral embedding
of $A$ as SSC-A and SSC-ASE, respectively. 
The parameters \(\vartheta\)
that controls the sparsity for SSC-A and SSC-ASE were chosen via a preliminary cross-validation experiment. 
In practice, $\vartheta$ that guarantee SDP (if it is possible for the particular simulated data) often result in more than $K$ disconnected subgraphs, so a smaller $\vartheta$ that does not guarantee SDP was chosen, 
and the final clustering step of SSC-A and SSC-ASE was done
by fitting a Gaussian Mixture Model to the normalized Laplacian
eigenmap embeddings \citep{belkin03:_laplac} of the affinity matrix \(B\).
We also estimate the latent popularity vectors $\{\lambda^{(k \ell)}\}$
by assuming that the true community labels are known and then apply
Algorithm~\ref{alg:param_est}, and we compare this estimation method against an
MLE-based estimator as described in \citet{noroozi2019estimation} and
\citet{307cbeb9b1be48299388437423d94bf1}.

Modularity Maximization is NP-hard, so
\citet{307cbeb9b1be48299388437423d94bf1} used the Extreme Points
(EP) algorithm (\cite{le2016}) as a greedy
relaxation of the optimization problem; the EP algorithm has a running
time of $O(n^{K-1})$ where $n$ is the number of vertices in the graph
and $K$ is the number of communities.
For these simulations we instead replace the EP algorithm with the
Louvain algorithm for modularity maximization,
as the implementation of the EP algorithm in
\citet{307cbeb9b1be48299388437423d94bf1} is too computationally expensive for \(K > 2\). For \(K = 2\), it
was verified that the Louvain algorithm produces comparable results
to EP-MM.

For comparing methods, we define the community detection error as:
\[L_c(\hat{\sigma}, \sigma; V) =
\min_\pi \sum_i I(\pi \circ \hat{\sigma}(v_i) = \sigma(v_i))\]
where \(\sigma(v_i)\) is the true community label of vertex \(v_i\),
\(\hat{\sigma}(v_i)\) is the predicted label of \(v_i\), and \(\pi\) is
a permutation operator. This is effectively the ``misclustering count''
of clustering function \(\hat{\sigma}\).

For parameter estimation, because the popularity parameters
$\{\lambda_{ik}\}$ are unidentifiable, we instead estimate the edge
probabilities $P_{ij} = \lambda_{i z_j} \lambda_{j z_i}$ via the
quantities $\hat{P}_{ij} = \hat{\lambda}_{iz_j} \hat{\lambda}_{jz_i}$. The
parameter estimation error is then given by the normalized 
Frobenius norm of $P$ divided by the number of vertices, i.e.,
$$\mathrm{RMSE}(\hat{P}, P) = \frac{1}{n} \|\hat{P} - P\|_F.$$

We also note that unlike the MLE-based method
\citep{307cbeb9b1be48299388437423d94bf1}, the ASE method in
Algorithm~\ref{alg:param_est} can be trivially modified so as to not
require the community labels if we are only interested in 
estimating $P$. More specifically we first compute the ASE
$\hat{Z}$ of $A$ (see Remark~\ref{rem:ase}) and then compute $\hat{P}
= \hat{Z} I_{p,q} \hat{Z}^{\top}$. The resulting estimate $\hat{P}$
will have the same convergence rate as that given in Eq.~(\ref{eq:conv_prob}).

\hypertarget{balanced-communities}{%
\subsection{Balanced Communities}\label{balanced-communities}}

In each simulation, community labels \(z_1, ..., z_n\) were drawn from a
multinomial distribution with mixture parameters
\(\{\alpha_1, ..., \alpha_K\}\), then \(\{\lambda^{(k \ell)}\}_K\) according
to the drawn community labels, \(P\) was constructed using the drawn
\(\{\lambda^{(k \ell)}\}_K\), and \(A\) was drawn from \(P\).

For these examples, we set the following parameters:

\begin{itemize}
\tightlist
\item
  Number of vertices \(n = 128, 256, 512, 1024, 2048, 4096\)
\item
  Number of underlying communities \(K = 2, 3, 4\)
\item
  Mixture parameters \(\alpha_k = 1 / K\) for \(k = 1, ..., K\), (i.e.,
  each community label has an equal probability of being drawn)
\item
  Community labels
  \(z_k \stackrel{\text{iid}}{\sim} \Multinomial(\alpha_1, ..., \alpha_K)\)
\item
  Within-group popularities
  \(\lambda^{(kk)} \stackrel{\text{iid}}{\sim} \Betadist(2, 1)\)
\item
  Between-group popularities
  \(\lambda^{(k \ell)} \stackrel{\text{iid}}{\sim} \Betadist(1, 2)\) for
  \(k \neq \ell\)
\end{itemize}
Fifty simulations were performed for each combination of $n$ and
$K$. The results for community recovery and parameter estimations are
presented in Fig.~\ref{fig:clust_err_ct_sim} and
Fig.~\ref{fig:p_block_est}, respectively.

Fig.~\ref{fig:clust_err_ct_sim} shows that OSC recovers the community
perfectly as $n$ increases, i.e., the number of mislabeled vertices
goes to $0$. The performance of SSC-ASE is comparable to OSC for $K
\geq 3$ but is noticeably worse when $K = 2$. Similarly, SSC on both the embedding and on the
adjacency matrix produces similar trends for \(K > 2\). The difference
in performance between SSC-A and SSC-ASE for \(K = 2\) can be attributed to the final spectral
clustering step of the affinity matrix. While the subspace detection
property is guaranteed for large $n$, in our simulations, setting the
sparsity parameter $\vartheta$ to the required value usually resulted
in more than $K$ disconnected subgraphs in the affinity matrix
$\hat{B}$.  
We instead chose a smaller sparsity parameter, 
necessitating a final clustering step. 
A GMM was fit to the normalized Laplacian eigenmap of $\hat{B}$,
but visual inspection suggests that the communities are not
distributed as a mixture of Gaussians in the eigenmap. 
A different choice of mixture distribution may result in better performance. 

\begin{figure}[tp]
{\centering \includegraphics{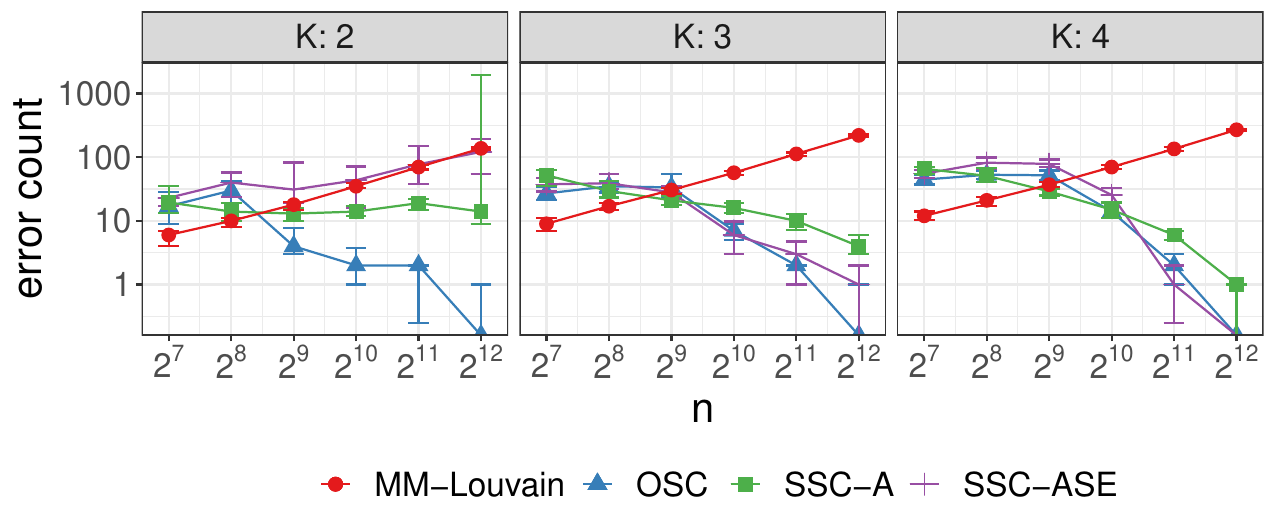}
}

\caption{Median and IQR of community detection error. Communities are approximately balanced. Simulations were repeated 50 times for each sample size.}\label{fig:clust_err_ct_sim}
\end{figure}

Given ground truth community labels, Fig.~\ref{fig:p_block_est} shows
that Algorithm~\ref{alg:param_est} and the MLE-based
plug-in estimators perform comparably, with root mean square
error decaying at rate approximately \(n^{-1/2}\) as $n$ increases.

\begin{figure}[tp]
{\centering \includegraphics{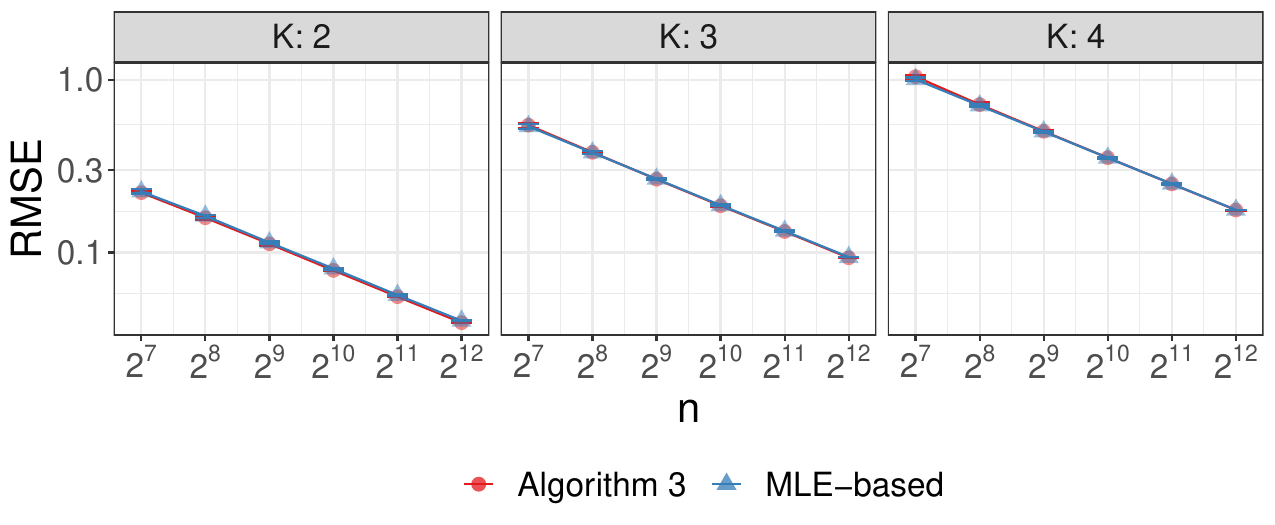}}
\caption{Median and IQR RMSE for edge probability matrices
  reconstructed from the outputs of Algorithm~\ref{alg:param_est}
  (red) compared against outputs of an MLE-based method (blue) proposed in \citet{307cbeb9b1be48299388437423d94bf1}.
  Simulations were repeated 50 times for each sample size. Communities were drawn to be approximately balanced.}
\label{fig:p_block_est}
\end{figure}

\hypertarget{imbalanced-communities}{%
\subsection{Imbalanced Communities}\label{imbalanced-communities}}

Simulations performed in this section are the same as those in the
previous section with the exception of the mixture parameters
\(\{\alpha_1, ..., \alpha_K\}\) used to draw community labels from the
multinomial distribution. For these examples, we set the following
parameters:

\begin{itemize}
\tightlist
\item
  Number of vertices \(n = 128, 256, 512, 1024, 2048, 4096\)
\item
  Number of underlying communities \(K = 2, 3, 4\)
\item
  Mixture parameters \(\alpha_k = \frac{k^{-1}}{\sum_{\ell=1}^K \ell^{-1}}\)
  for \(k = 1, ..., K\)
\item
  Community labels
  \(z_k \stackrel{\text{iid}}{\sim} \Multinomial(\alpha_1, ..., \alpha_K)\)
\item
  Within-group popularities
  \(\lambda^{(kk)} \stackrel{\text{iid}}{\sim} \Betadist(2, 1)\)
\item
  Between-group popularities
  \(\lambda^{(k \ell)} \stackrel{\text{iid}}{\sim} \Betadist(1, 2)\) for
  \(k \neq \ell\)
\end{itemize}
Fifty simulations were performed for each combination of $n$ and
$K$. The results for community recovery and parameter estimations are
presented in Fig.~\ref{fig:clust_err_ct_sim_imba} and Fig.~\ref{fig:lambda_est_p_imba}, respectively.

From Fig.~\ref{fig:clust_err_ct_sim_imba} we once again see that the
number of mislabeled vertices trending to $0$ for OSC. The
performance of SSC-ASE is comparable to that of OSC for $K > 2$ but
is worse when $K = 2$. Fig.~\ref{fig:lambda_est_p_imba} indicates that
the parameter estimation error also decays at rate $n^{-1/2}$ similar
to that in the balanced communities setting.

\begin{figure}[H]
{\centering \includegraphics{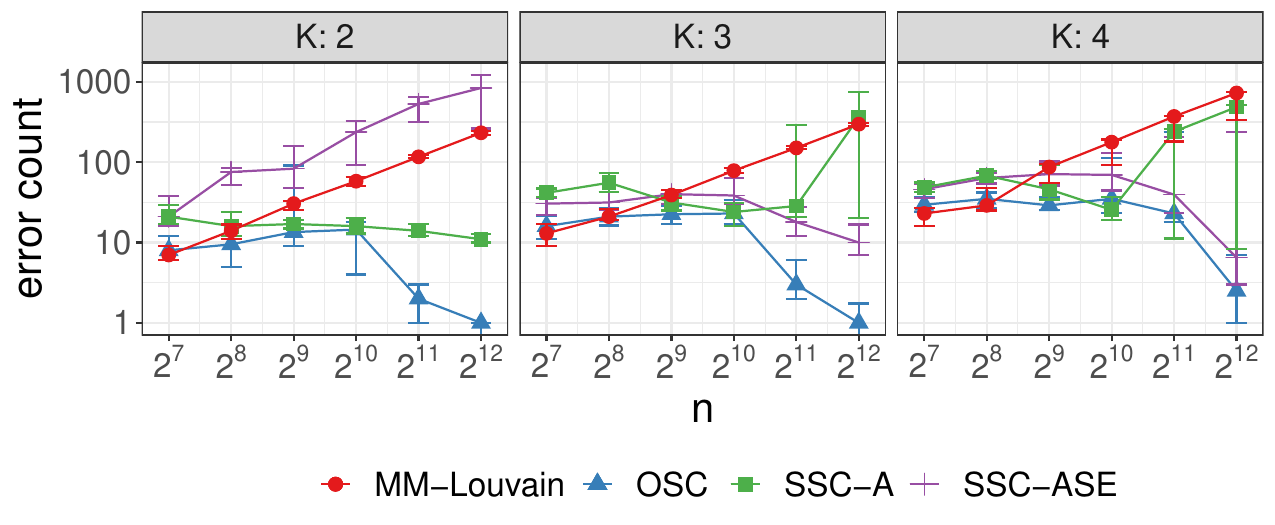}
}
\caption{Median and IQR of community detection error. Communities are imbalanced. 
Simulations were repeated 50 times for each sample size.}\label{fig:clust_err_ct_sim_imba}
\end{figure}

\begin{figure}[H]
{\centering \includegraphics{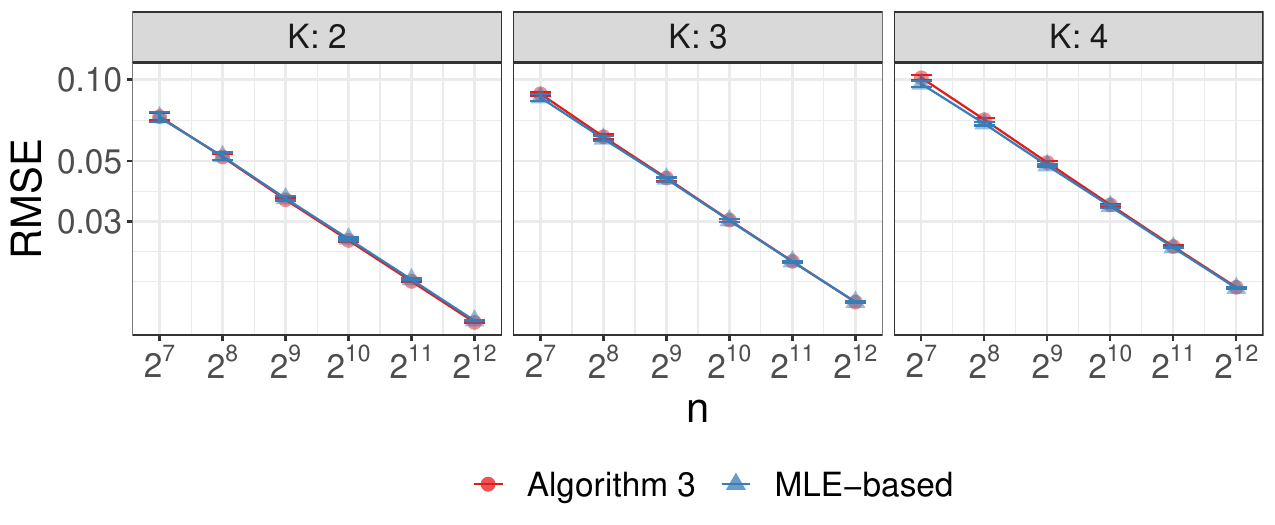}
}
\caption{Median and IQR RMSE of edge probabilities derived from the
  outputs of Algorithm \ref{alg:param_est} (red) compared against an
  MLE-based method (blue) described in
  \cite{307cbeb9b1be48299388437423d94bf1}.
  Simulations were repeated 50 times for each sample size. Communities were drawn to be imbalanced.}
\label{fig:lambda_est_p_imba}
\end{figure}

\hypertarget{disassortative-models}{%
\subsection{Disassortative Models}\label{disassortative-models}}

\citet{rubindelanchy2017statistical} demonstrated the power of applying GRDPG-based approaches to disassortative block models. 
Likewise, demonstrate that GRDPG-based algorithms OSC and SSC-ASE perform well on disassortative PABMs. 
Simulations performed in this section are the same as those in Section \ref{balanced-communities} 
with the exception of the distributions from which within and between-group popularity parameters are drawn. 
Here, we draw these parameters such that the expected value is $1/3$ for the within-group popularity parameters 
and $2/3$ for the between-group popularity parameters:

\begin{itemize}
\tightlist
\item
  Number of vertices \(n = 128, 256, 512, 1024, 2048, 4096\)
\item
  Number of underlying communities \(K = 2, 3, 4\)
\item
  Mixture parameters \(\alpha_k = \frac{k^{-1}}{\sum_{\ell=1}^K \ell^{-1}}\)
  for \(k = 1, ..., K\)
\item
  Community labels
  \(z_k \stackrel{\text{iid}}{\sim} \Multinomial(\alpha_1, ..., \alpha_K)\)
\item
  Within-group popularities
  \(\lambda^{(kk)} \stackrel{\text{iid}}{\sim} \Betadist(1, 2)\)
\item
  Between-group popularities
  \(\lambda^{(k \ell)} \stackrel{\text{iid}}{\sim} \Betadist(2, 1)\) for
  \(k \neq \ell\)
\end{itemize}

OSC, SSC-ASE, and SSC-A perform similarly on disassortative PABMs compared to the simulations in Sections \ref{balanced-communities} and \ref{imbalanced-communities}. 
(Fig. \ref{fig:clust_k_disassortative} and \ref{fig:lambda_est_disassortative}).

\begin{remark}
While we call the models in this section ``disassortative'', it is our view that the assortative/disassortative distinction is not applicable to PABMs due to its flexibility. 
Unlike the SBM and DCBM, in the PABM, each vertex is free to have a higher affinity to its own community or to other communities, independent of each other. 
In other words, within the same community, $v_i$ may have a larger popularity parameter to its own community whereas $v_j$ may have a larger popularity parameter to a different community. 
Furthermore, when viewed as GRDPGs, the assortativity or disassortativity of SBMs and DCBMs affects whether $P$ is positive semidefinite, which affects ASE-based approaches to analyzing the graph \citep{rubindelanchy2017statistical}, 
but in the full rank PABM, $P$ is never positive semidefinite and will always have $K (K + 1) / 2$ positive eigenvalues and $K (K - 1) / 2$ negative eigenvalues. 
\end{remark}

\begin{figure}[H]
{\centering \includegraphics{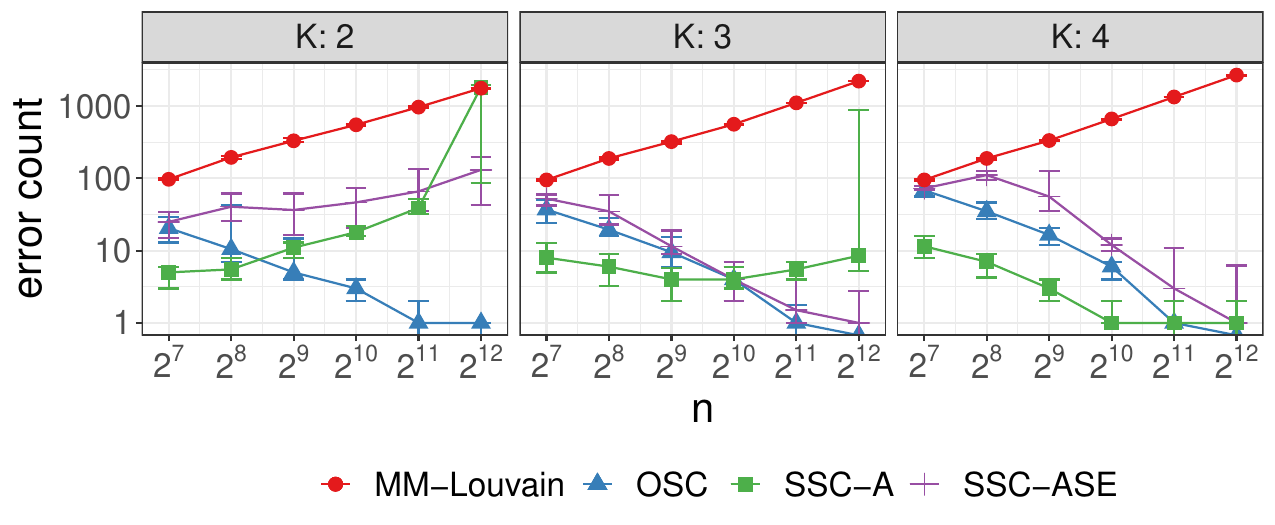}
}
\caption{Median and IQR of community detection error. Communities are approximately balanced. 
Edge probabilities were drawn to be disassortative. 
Simulations were repeated 50 times for each sample size.}\label{fig:clust_k_disassortative}
\end{figure}

\begin{figure}[H]
{\centering \includegraphics{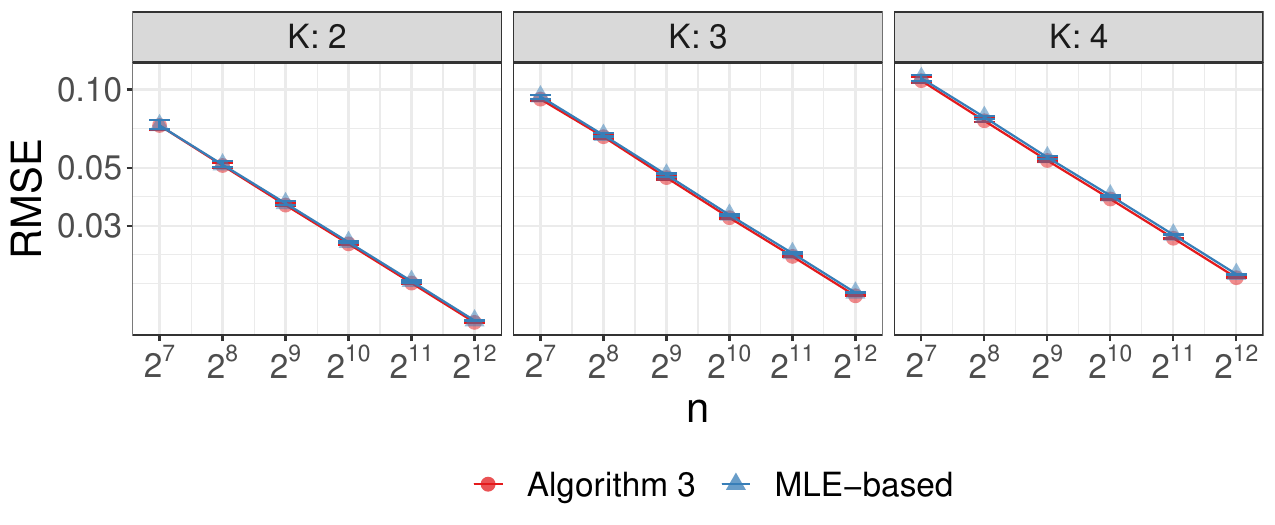}
}
\caption{Median and IQR RMSE of edge probabilities derived from the
  outputs of Algorithm \ref{alg:param_est} (red) compared against an
  MLE-based method (blue) described in
  \cite{307cbeb9b1be48299388437423d94bf1}.
  Simulations were repeated 50 times for each sample size. 
  Communities were drawn to be approximately balanced.
  Edge probabilities were drawn to be disassortative.}
\label{fig:lambda_est_disassortative}
\end{figure}

\hypertarget{real-data-examples}{%
\section{Applications}\label{real-data-examples}}

In the first example, we applied OSC (Algorithm~\ref{alg:osc}) to the Leeds Butterfly
dataset \citep{Wang_2018} consisting of visual similarity measurements
among 832 butterflies across 10 species. The graph was modified to match
the example from \citet{noroozi2019estimation}: Only the $K=4$ most
frequent species were considered, and the similarities were discretized
to \(\{0, 1\}\) via thresholding. Fig.~\ref{fig:butterfly} shows a
sorted adjacency matrix sorted by the resultant clustering.

Comparing against the ground truth species labels, \citet{noroozi2019estimation} 
report that SSC on the adjacency matrix achieves an adjusted Rand index of 73\% in their implementation, 
whereas OSC achieves 92\% and SSC on the ASE achieves 96\%. 

\begin{figure}[H]

{\centering \includegraphics{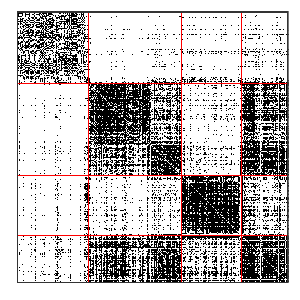}

}

\caption{Adjacency matrix of the Leeds Butterfly dataset 
after sorting by the clustering outputted by OSC.}\label{fig:butterfly}
\end{figure}

In the second example, we applied OSC to the British MPs Twitter network
\citep{greene2013producing}, the Political Blogs network
\citep{10.1145/1134271.1134277}, and the DBLP network
\citep{NIPS2009_3855, 10.1007/978-3-642-15880-3_42}. For this data
analysis, we subsetted the data as described in
\citet{307cbeb9b1be48299388437423d94bf1} for their analysis of the
same networks. Our methods slightly underperformed compared to modularity
maximization, although performance is comparable. The run time of OSC
is however much smaller than that of modularity maximization.

\begin{figure}[H]
{\centering \includegraphics{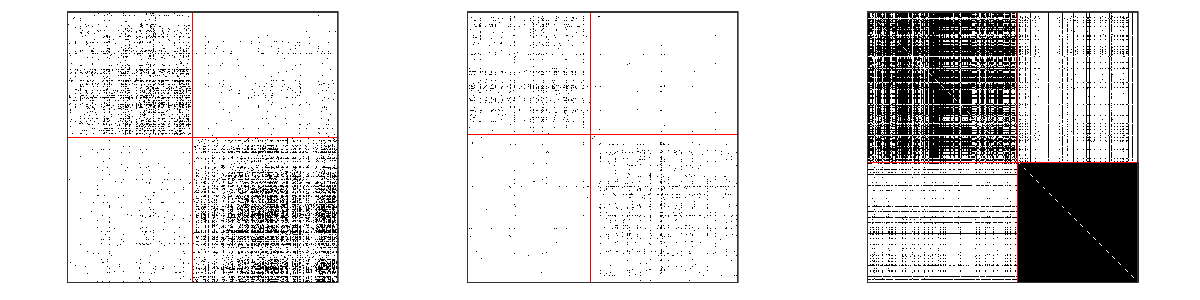}
}
\caption{Adjacency matrices of (from left to right) the British MPs, Political Blogs, and DBLP networks after sorting by the clustering outputted by OSC.}\label{fig:mp}
\end{figure}

\begin{table}
\centering
\begin{tabular}[t]{l|r|r|r}
\hline
Network & MM & SSC-ASE & OSC\\
\hline
British MPs & 0.003 & 0.012 & 0.006\\
\hline
Political Blogs & 0.050 & 0.187 & 0.062\\
\hline
DBLP & 0.028 & 0.072 & 0.059\\
\hline
\end{tabular}
\caption{\label{tab:unnamed-chunk-6}Community detection error rates on the British MPs Twitter, Political Blogs, and DBLP networks using modularity maximization, sparse subspace clustering, and OSC.}
\end{table}

In the third example (Fig.~\ref{fig:households-figure} and 
Table~\ref{tab:households-table}), 
we analyzed the Karantaka villages data studied by
\citet{DVN/U3BIHX_2013}. We chose the \texttt{visitgo}
networks from villages 12, 31, and 46 at the household level. 
In these networks, each node is a household and each edge is 
an interaction between members of pairs of households. 
The label of interest is the religious affiliation. 
The networks were truncated to religions ``1'' and ``2'', 
and vertices of degree 0 were removed. 
The villages were chosen based on there being an adequate number of nodes 
between households within each religion. 

\begin{figure}[tp]
{\centering \includegraphics{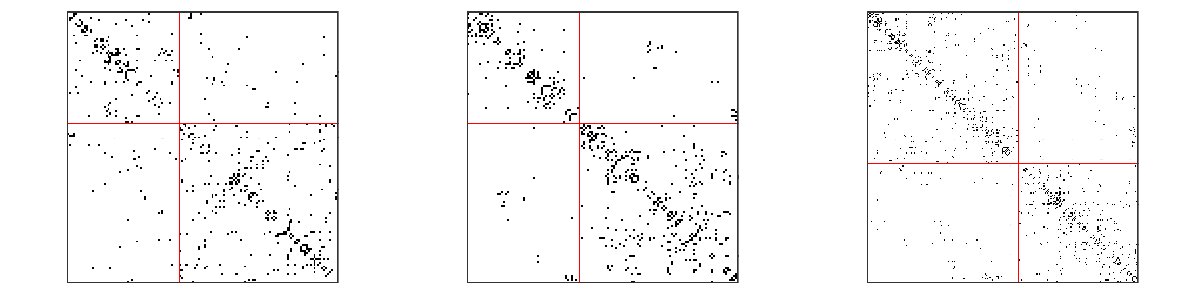}
}
\caption{Adjacency matrix of the Karnataka villages data, arranged by the clustering produced by OSC (left). The villages studied here are, from left to right, 12, 31, and 46.}\label{fig:households-figure}
\end{figure}

\begin{table}
\centering
\begin{tabular}[t]{l|r|r|r}
\hline
Network & MM & SSC-ASE & OSC\\
\hline
Village 12 & 0.270 & 0.291 & 0.227\\
\hline
Village 31 & 0.125 & 0.059 & 0.051\\
\hline
Village 46 & 0.052 & 0.069 & 0.056\\
\hline
\end{tabular}
\caption{\label{tab:households-table}Community detection error rates for identifying household religion.}
\end{table}

\hypertarget{discussion}{%
\section{Discussion}\label{discussion}}

Our central result states that the Popularity Adjusted Block Model 
is a special case of the Generalized Random Dot Product Graph. 
In particular, the PABM with $K$ communities is a GRDPG 
for which the communities are represented by 
mutually orthogonal $K$-dimensional subspaces 
of the $K^2$-dimensional latent space. 
This result extends previous results that connected 
the Stochastic Block Model and the Degree Corrected Block Model 
to Random Dot Product Graphs. 
Replacing RDPGs with GRDPGs is a critical step in this line of research, 
as a PABM is not necessarily a RDPG. 

Because all Bernoulli Graphs are GRDPGs, it should be possible to
invent and study new families of Bernoulli Graphs by characterizing
them as special cases of GRDPGs and exploiting the latent structures
that define them.  The present work illustrates the power of this
approach.  We recover the latent structure of the PABM by Adjacency
Spectral Embedding, then exploit that structure to improve statistical
inference.  Exploiting the fact that PABM communities correspond to
orthogonal subspaces, we propose Orthogonal Spectral Clustering for
community detection and demonstrate that the number of misclassified
vertices approaches zero with high probability as the size of the
graph increases.  This is a stronger result than previously proposed
algorithms \citep{307cbeb9b1be48299388437423d94bf1}, which only
guarantee that the error rate (and not count) approaches zero
asymptotically.  Parameter estimation can be performed in a similar
fashion using the ASE, for which we also prove that the per-parameter
error approaches zero asymptotically.

A secondary benefit of the GRDPG approach is that the latent structure
may be used to improve existing algorithms.  For example, one
algorithm for PABM community detection \citep{noroozi2019estimation}
relies on Sparse Subspace Clustering.  The latent structure of the
PABM provides a natural justification for SSC for the PABM and leads
to an improvement over the previous implementation.  The improved
algorithm applies SSC to the ASE, and we prove that the ASE of the
PABM obeys the Subspace Detection Property with high probability if
the graph is large.

Finally, one might well inquire what one gains and what one sacrifices
by assuming that a Bernoulli Graph is a PABM.  The GRDPG model offers
a plausible way to pursue this inquiry.  Absent a known latent
structure that can be exploited by specialized methods, the GRDPG-ASE
approach transforms the problem of network community detection to the
much-studied problem of clustering vectors in Euclidean space.
Communities of vertices are defined as clusters of latent vectors.
After ASE, a standard clustering algorithm, e.g., single linkage, is
used to infer the communities.  In future research, we intend to use
such general algorithms as baselines and measure the efficiency of the
PABM algorithms (and other specialized algorithms) by studying how
much they improve on general algorithms when the specified latent
structure obtains.

\appendix

\hypertarget{proofs}{%
\section{Proofs of Theorem~\ref{theorem4}, Theorem~\ref{theorem5}, and Theorem~\ref{theorem6}}}
Let \(V_n\) and \(\hat{V}_n\)
be the $n \times K^2$ matrices whose columns are the eigenvectors of \(P\) and \(A\) corresponding to the
$K^2$ largest eigenvalues (in modulus), respectively. 
We first state an important technical lemma for bounding the maximum
$\ell_2$ norm difference between the rows of $\hat{V}_n$ and
$V_n$. See \citet{cape_biometrika} and 
\citet[Lemma~5]{rubindelanchy2017statistical} for a proof. 

\begin{lemma}
\label{lem:technical}
Let $A \sim \mathrm{PABM}(\{\lambda^{(k \ell \ell)}\}_{K})$ be a $K$-blocks
PABM graph on $n$ vertices and let \(V\) and \(\hat{V}\)
be the $n \times K^2$ matrices whose columns are the eigenvectors of 
\(P\) and \(A\) corresponding to the
$K^2$ largest eigenvalues in modulus, respectively.
Let \(v_i^\top\) and \(\hat{v}_i^\top\) denote the $i$th 
row of \(V\) and \(\hat{V}\), respectively. 
Then there exists a constant $c > 1$ and an orthogonal matrix $W$ such
that with high probability,
$$\max_{i} \|W \hat{v}_i - v_i\|  = O\Big(\frac{\log^{c}n}{n \sqrt{\rho_n}} \Big).$$
In particular we can take $c = 1 + \epsilon$ for any $\epsilon > 0$. 
\end{lemma}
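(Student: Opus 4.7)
The plan is to apply the existing two-to-infinity norm eigenvector perturbation results for spectral embeddings of inhomogeneous random graphs, as developed in \citet{cape_biometrika} and \citet[Lemma~5]{rubindelanchy2017statistical}. Theorem~\ref{theorem2} already exhibits the PABM as a GRDPG with latent positions $\tilde{\Pi}XU$ and signature $(K(K+1)/2,K(K-1)/2)$, so the task reduces to verifying that this GRDPG instance meets the regularity hypotheses of those perturbation theorems and then reading off the stated rate.

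First I would quantify the spectrum of $P$. Under the normalization $\|\Lambda\|_F = \sqrt{n}$ together with the assumption that the row-norms of $\Lambda$ are bounded away from zero, all $K^2$ nonzero eigenvalues of $P$ have magnitude of order $n\rho_n$, and the eigengap between the $K^2$-th and the $(K^2{+}1)$-th eigenvalue is also of order $n\rho_n$ (the latter being exactly $n\rho_n$ since $P$ has rank $K^2$). Next, a matrix Bernstein / \citep{lei2015consistency}-type concentration bound gives $\|A - P\|_2 = O(\sqrt{n\rho_n})$ with high probability whenever $n\rho_n = \omega(\log n)$, which combined with Davis--Kahan already yields a Frobenius-norm bound $\|\hat{V}W - V\|_F = O\bigl((n\rho_n)^{-1/2}\bigr)$ for some orthogonal $W$.

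The main step is to upgrade this Frobenius bound to the desired two-to-infinity bound, and here I would follow the Cape--Tang--Priebe template. One writes the expansion
\[
\hat{V}W - V = (A - P)\, V D^{-1} + R_n,
\]
where the linear term is controlled row-by-row by a Bernstein inequality applied to the independent Bernoulli entries in each row of $A-P$, and $R_n$ is a higher-order residual whose rows are bounded by the Frobenius perturbation estimate combined with $\|V\|_{2\to\infty}$. The required row-wise delocalization $\|V\|_{2\to\infty} = O(n^{-1/2})$ follows from $V = \tilde\Pi X R$ with $R$ invertible (as shown in the proof of Theorem~\ref{theorem3}), the block-diagonal structure of $X$, and the assumption that the row-norms of $\Lambda$ are bounded. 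Taking a union bound over the $n$ rows for the Bernstein step introduces a $\log n$ factor, and keeping track of constants and a further $\log n$ slack in the eigengap and noise bounds yields the claimed $\log^{1+\epsilon} n / (n\sqrt{\rho_n})$ rate for any $\epsilon > 0$, valid as soon as $n\rho_n = \omega(\log^{4}n)$.

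The main obstacle is precisely this row-wise delocalization of $V$: one has to argue that the eigenvectors of $P$ do not concentrate on a few coordinates even though $P$ is low-rank and block-structured. Once this is secured from the geometry of Theorem~\ref{theorem2} (orthogonal $K$-dimensional community subspaces spanning the full $K^2$-dimensional latent space), the remaining ingredients are verifications of standard hypotheses and the conclusion follows directly from the cited perturbation results.
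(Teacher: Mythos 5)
Your proposal is correct and follows essentially the same route as the paper, which does not prove this lemma itself but simply cites \citet{cape_biometrika} and Lemma~5 of \citet{rubindelanchy2017statistical}; your sketch (spectral-norm concentration, eigengap of order $n\rho_n$, the first-order expansion $(A-P)VD^{-1}$ plus residual, and the delocalization $\|V\|_{2\to\infty}=O(n^{-1/2})$ from the block structure of $X$) is precisely the argument contained in those references. The only point to be slightly careful about is that the claim that all $K^2$ nonzero eigenvalues of $P$ are of order $n\rho_n$ requires a quantitative full-rank (homogeneity) condition on the $\Lambda^{(k)}$ blocks, not merely that the row norms of $\Lambda$ are bounded away from zero, but this is an assumption the paper itself invokes implicitly.
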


\begin{proof}[Proof of Theorem~\ref{theorem4}]
Recall the notations in Lemma~\ref{lem:technical} and note that,
under our assumption that the latent vectors $\lambda^{(k \ell)}$
are all homogeneous, we have $\max_{i} \|v_i\| =
O(n^{-1/2})$. 

Next recall Theorem~\ref{theorem3}; in particular $B_{ij} = nv_i^{\top}
v_j$. 
We therefore have
\[\begin{split}
 \max_{ij} |\hat{B}_{ij} - B_{ij}| &= \max_{ij} n |\hat{v}_i^\top \hat{v}_j -
v_i^\top v_j| \\
& \leq n \max_{ij} |\hat{v}_i^\top W W^\top \hat{v}_j -
v_i^\top v_j| \\
& \leq n \max_{i,j} \Bigl(\|W^{\top} \hat{v}_i - v_i\| \times \|\hat{v}_j\|
+ \|W^{\top} \hat{v}_j - v_j\| \times \|v_i\|\Bigr) \\
& \leq n \Bigl(\max_{ij}  \|W \hat{v}_i  - v_i \|^2 +  \|W
\hat{v}_i  - v_i \| \times \|v_j\| +  \|W \hat{v}_j  - v_j \| \times \|v_i\|\Bigr) 
\\ &
\leq n \max_{i} \|W \hat{v}_i  - v_i \|^2 + 2n \max_{i}
\|W \hat{v}_i  - v_i \| \times \max_{j} \|v_j\|
\\
& = O \Big( \frac{\log^{c}{n}}{n^{1/2} \rho_n^{1/2}} \Big)
\end{split}\]
with high probability.
Theorem~\ref{theorem4} follows from the above bound together with the
conclusion in Theorem~\ref{theorem3} that $B_{ij} = 0$ whenever vertices $i$ and $j$
belongs to different communities. 
\end{proof}

We now provide a proof of Theorem~\ref{theorem5}. Our proof is based
on verifying the sufficient conditions given in Theorem~6 of
\citet{jmlr-v28-wang13}
under which sparse subspace clustering based on solving the
optimization problem in Eq.~\eqref{eq:ssc} yields an affinity matrix
$B = |C| + |C^{\top}|$ satisfying the subspace detection property of
Definition~\ref{def:subspace_detection}. We first recall a few
definitions used in \citet{soltanolkotabi2012} and \citet{jmlr-v28-wang13}; for ease of exposition,
these definitions are stated using the notations of the current
paper and we will drop the explicit dependency on $n$ from our
eigenvectors $\hat{V}$ of $A$ and $V$ of $P$.
\begin{definition}[Inradius]
  \label{inradius}
The inradius of a convex body $\mathcal{P}$, denoted by $r(\mathcal{P})$, is
defined as the radius of the largest Euclidean ball inscribed in $\mathcal{P}$.
Let $X$ be a $n \times d$ matrix with rows $x_1, x_2, \dots,
x_n$. We then define, with a slight abuse of notation, $r(X)$ as the
inradius of the convex hull formed by $\{\pm x_1, \pm x_2, \dots, \pm x_n\}$. 
\end{definition}

\begin{definition}[Subspace incoherence]
  \label{def:subspace_incoherence}
Let $\hat{V}$ be the eigenvectors of $A$
corresponding to the $K^2$ largest eigenvalues in modulus. Let
$\hat{V}^{(k)}$ denote the matrix formed by keeping only the rows of
$\hat{V}$ corresponding to the $k^{th}$
community and let $\hat{V}^{(-k)}$ denote the matrix formed by
omitting the rows of $\hat{V}$
corresponding to the $k^{th}$ community. Let $(\hat{v}_i^{(k)})^\top$ denote
the $i$th row of $\hat{V}^{(k)}$ and $\hat{V}_{-i}^{(k)}$ be $\hat{V}^{(k)}$ with
the $i^{th}$ row omitted. Let $V$, $V^{(k)}$, $V^{(-k)}$, and
$v_i^{(k)}$ be defined similarly using the eigenvectors $V$ of
$P$. Finally let $\mathcal{S}^{(k)}$ be the vector space spanned by the
rows of $V^{(k)}$. 

Now define $\nu_{i}^{(k)}$ for $k = 1,2,\dots,K$ and $i =
1,2,\dots,n_{k}$ as the solution of the following optimization problem
$$\nu_{i}^{(k)} = \max_\eta (\hat{v}_i^{(k)})^\top \eta - \frac{1}{2
  \lambda} \eta^\top \eta, \quad \text{subject to $\|V_{-i}^{(k)}
  \eta\|_\infty \leq 1$.}$$
Given $\nu_i^{(k)}$, let $\mathbb{P}_{\mathcal{S}^{(k)}}(\nu_i^{(k)})$
be the vector in $\mathbb{R}^{K^2}$ corresponding to the orthogonal projection of $\nu_i^{(k)}$ onto the vector space
$\mathcal{S}^{(k)}$ and define the projected dual direction $w_{i}^{(k)}$
as
$$w_i^{(k)} =
\frac{\mathbb{P}_{\mathcal{S}^{(k)}}(\nu_i^{(k)})}{\|\mathbb{P}_{\mathcal{S}^{(k)}}(\nu_i^{(k)})\|}.$$
Now let $W^{(k)} = \bigl[ w_1^{(k)} \mid \cdots \mid w_{n_k}^{(k)} \bigr]^\top$
and define the subspace incoherence for $\hat{V}^{(k)}$ by
$$\mu^{(k)} = \mu(\hat{V}^{(k)}) = \max\limits_{v \in V^{(-k)}} \|W^{(k)} v\|_\infty.$$
\end{definition}

With the above definitions in place, we are now ready to state our
proof of Theorem~\ref{theorem5}.

\begin{proof}[Proof of Theorem~\ref{theorem5}]
For a given $k = 1,2\dots,K$, let $r^{(k)} = \min_{i}r(V_{-i}^{(k)})$ be inradius of the convex hull formed by
the rows of $V_{-i}^{(k)}$ and let $r_* = \min_{k} r^{(k)}$. Then Theorem~6 in
\citet{jmlr-v28-wang13} states that there exists a $\lambda > 0$
such that $\sqrt{n} \hat{V}$ satisfies
the subspace detection property in
Definition~\ref{def:subspace_detection} whenever
the following two conditions are satisfied
\begin{gather}
  \label{eq:cond1}
  \mu^{(k)} < r^{(k)} \quad \text{for all $k = 1,2,\dots,K$}, \\
  \label{eq:cond2}
  \max_{i} \|W \hat{v}_{i} - v_{i}\| \leq \min_{k} \frac{r_*(r^{(k)} -
    \mu^{(k)})}{2 + 7 r^{(k)}}.
\end{gather}
We now verify that for sufficiently large n, Eq.~\eqref{eq:cond1} and Eq.~\eqref{eq:cond2}
holds with high probability.

{\bf Verifying Eq.~\eqref{eq:cond1}}. If $n$ is sufficiently large then
there are enough vertices in each community $k$ so that
$\mathrm{span}(V_{-i}^{(k)}) = \mathcal{S}^{(k)}$ for all $i$ and hence
\(r^{(k)} = \min_{i} r(V_{-i}^{(k)}) > 0\) for
all $k = 1,2,\dots,K$. 

Next, by Theorem \ref{theorem3} we have that the subspaces
$\{\mathcal{S}^{(1)}, \dots, \mathcal{S}^{(K)}\}$
are mutually orthogonal, i.e., $v^{\top} w = 0$ for all $v \in
\mathcal{S}^{(k)}$ and $w \in \mathcal{S}^{(\ell)}$ with $k \not =
\ell$. Now let $z \in \mathbb{R}^{K^2}$ be arbitrary and let
$\tilde{z} = \mathbb{P}_{\mathcal{S}^{(k)}} z$ be the projection of
$z$ onto $\mathcal{S}^{(k)}$. We then have $v^{\top} \tilde{z} =
0$ for all $v \in V^{(-k)}$. Because $z$ is arbitrary, this implies 
$\|W^{(k)} v\|_{\infty} = 0$ for all $v
\in V^{(-k)}$ and hence $\mu^{(k)} = 0$ for all $k
=1,2,\dots,K$. Therefore $\mu^{(k)} < r^{(k)}$ for all $k =
1=2,\dots,K$ as desired.

{\bf Verifying Eq.~\eqref{eq:cond2}}.
Let $\delta = \max_{i} \sqrt{n} \|W
\hat{v}_{i} - v_{i}\|$. Then from Lemma~\ref{lem:technical}, we have
\(\delta \stackrel{a.s.}{\to} 0\) and hence
$$\delta < \min_{k} \frac{r_* (r^{(k)} - \mu^{(k)})}{2 + 7 r^{(k)}}$$
asymptotically almost surely. 

In summary $\sqrt{n} \hat{V}$ satisfies the subspace detection property
with probability converging to $1$ as \(n \to \infty\).
\end{proof}

\begin{remark}
Theorem 6 of \citet{jmlr-v28-wang13} assumes that each row $v_i$
of $V$ has unit norm, i.e., $\|v_{i}\| = 1$  for all $i$. This
assumption has the effect of scaling the $r^{(k)}$ so that $r^{(k)}
\leq 1$ for all $k = 1,2,\dots,K$. We emphasize that this assumption
has no effect on the proof of Theorem \ref{theorem5}. Indeed,
because $\mu^{(k)} = 0$ for all $k$, as long as the rows of $V^{(k)}$
spans the subspace $\mathcal{S}^{(k)}$, then $a r^{(k)} > \mu^{(k)}$ 
for any scalar $a > 0$. 
\end{remark}

\begin{proof}[Proof of Theorem \ref{theorem6}] Let \(P\) be
organized by community such that \(P^{(k \ell)}\) denote the $n_k \times
n_{\ell}$ matrix obtained by keeping only the rows of $P$
corresponding to vertices in community $k$ and the columns of $P$
corresponding to vertices in community $\ell$. We define $A^{(k
  \ell)}$ analogously. Recall that $P^{(k \ell)} = \lambda^{(k \ell)} (\lambda^{(\ell k)})^{\top}$ for all $k, \ell$. We now consider estimation of $P^{(k \ell)}$
for the cases when $k = \ell$ versus when $k \not = \ell$.

\emph{Case \(k = l\)}. Let $P^{(kk)} =
\sigma_{kk}^2 u^{(kk)} (u^{(kk)})^\top$ be the singular value
decomposition of $P^{(kk)}$. We can then define
$\tilde{\lambda}^{(kk)} = \sigma_{kk} u^{(kk)}$. 
Now let $\hat{U}^{(kk)} \hat{\Sigma}^{(kk)} (\hat{U}^{(kk)})^\top$ be the
singular value decomposition of \(A^{(kk)}\), and let
$\hat{\sigma}_{kk}^2 \hat{u}^{(kk)} (\hat{u}^{(kk)})^\top$ be the
best rank-one approximation of $A^{(kk)}$. Define
\(\hat{\lambda}^{(kk)} = \hat{\sigma}_{kk} \hat{u}^{(kk)}\). Then
\(\hat{\lambda}^{(kk)}\) is the adjacency spectral embedding approximation of \(\lambda^{(kk)}\)
and by Theorem 5 of \citet{rubindelanchy2017statistical}, we have
$$\|\hat{\lambda}^{(kk)} - \lambda^{(kk)}\|_{\infty} = O\Bigl(\frac{\log n_k}{\sqrt{n_k}}\Bigr)$$
with high probability. 
Here $\|\cdot\|_{\infty}$ denote the $\ell_{\infty}$ norm of a vector.

\emph{Case \(k \neq l\)}. 
Let \(P^{(k \ell)} = \sigma_{k \ell}^2 u^{(k \ell)} (v^{(k \ell)})^\top\) and 
\(P^{(\ell k)} = \sigma_{kl}^2 u^{(\ell k)} (v^{(\ell k)})^\top\)  be the singular
value decompositions and note that \(\sigma_{k \ell} = \sigma_{\ell
  k}\), \(u^{(k \ell)} = v^{(\ell k)}\), and
\(v^{(k \ell)} = u^{(\ell k)}\). 
Now define \(\lambda^{(k \ell)} = \sigma_{k \ell} u^{(k \ell)}\) and
\(\lambda^{(\ell k)} = \sigma_{k \ell}
v^{(k \ell)}\).

Next consider the Hermitian dilation

\[M^{(k \ell)} = 2 \begin{bmatrix} 0 & P^{(k \ell)} \\ P^{(\ell k)} & 0 \end{bmatrix}\]

which is a symmetric \((n_k + n_\ell) \times (n_k + n_\ell)\) matrix. The
eigendecomposition of \(M^{(k \ell)}\) is then

\[M^{(k \ell)} = 
\begin{bmatrix} u^{(k \ell)} & -u^{(k \ell)} \\ v^{(k \ell)} & v^{(k \ell)} \end{bmatrix} \times 
\begin{bmatrix} \sigma^2_{kl} & 0 \\ 0 & -\sigma^2_{kl} \end{bmatrix} \times
\begin{bmatrix} u^{(k \ell)} & -u^{(k \ell)} \\ v^{(k \ell)} & v^{(k \ell)} \end{bmatrix}^\top\]
Thus treating \(M^{(k \ell)}\) as the edge probability matrix of a GRDPG, we
have latent positions in \(\mathbb{R}^2\) given by the $(n_k + n_{\ell}) \times 2$ matrix
\[\Lambda^{(k \ell)} = \begin{bmatrix} 
  \sigma_{k \ell} u^{(k \ell)} & \sigma_{k \ell} u^{(k \ell)} \\ 
  \sigma_{k \ell} v^{(k \ell)} & -\sigma_{k \ell} v^{(k \ell)} 
\end{bmatrix} = 
\begin{bmatrix} 
  \lambda^{(k \ell)} & \lambda^{(k \ell)} \\ 
  \lambda^{( \ell k)} & -\lambda^{( \ell k)} 
\end{bmatrix}.\]
Now consider
\[\hat{M}^{(k \ell)} = \begin{bmatrix} 0 & A^{(k \ell)} \\ A^{(\ell k)} & 0 \end{bmatrix}\]

We can then view \(\hat{M}^{(k \ell)}\) as an adjacency matrix drawn from
the edge probabilities matrix \(M^{(k \ell)}\). Now suppose that the adjacency spectral
embedding of $\hat{M}^{(k \ell)}$ is represented as the $(n_k +
n_{\ell}) \times 2$ matrix
\[\hat{\Lambda}^{(k \ell)} = \begin{bmatrix} 
  \hat{\lambda}^{(k \ell)} & \hat{\lambda}^{(k \ell)} \\ 
  \hat{\lambda}^{(\ell k)} & -\hat{\lambda}^{(\ell k)} 
\end{bmatrix}\]
where each \(\hat{\lambda}^{(k \ell)}\) is defined as in Algorithm 3. Then
by Theorem 5 of \citet{rubindelanchy2017statistical}, there
exists an indefinite orthogonal transformation $W^{*}$ such that,  with
high probability,
$$\max_{i} |W^{*} \hat{\Lambda}_{i}^{(k \ell)} - \Lambda_{i}^{(k \ell)} \| =
O\Bigl(\frac{\log (n_k + n_{\ell})}{\sqrt{n_k + n_{\ell}}}\Bigr)$$
with high probability. Here $\Lambda_{i}^{(k \ell)}$ and
$\hat{\Lambda}_i^{(k \ell)}$ denote the $i$th rows of $\Lambda^{(k
  \ell)}$ and $\hat{\Lambda}^{(k \ell)}$, respectively. 

Furthermore, by looking at the proof of Theorem~5 in
\citep{rubindelanchy2017statistical}, we see that $W^{*}$ is also
blocks diagonal with $2$ blocks where the positive eigenvalues of $M^{(k \ell)}$
forming a block and the negative eigenvalues of $M^{(k \ell)}$ forming
the remaining block. 
Because $M^{(k \ell)}$ has one positive eigenvalue and one negative
eigenvalue, we see that $W^{*}$ is necessarily of the form $W^{*}
= \Bigl[\begin{smallmatrix} 1 & 0 \\ 0 & - 1\end{smallmatrix}\Bigr]$
Using this form for $W^{*}$, we obtain
$$\max\{\|\hat{\lambda}^{(k \ell)} - \lambda^{(k \ell)}\|_{\infty},
\|\hat{\lambda}^{(\ell k)} - \lambda^{(\ell k)}\|_{\infty}\}  =
O\Bigl(\frac{\log(n_k + n_{\ell})}{\sqrt{n_k + n_{\ell}}}\Bigr)$$
with high probability. Combining this bound with the bound for
$\|\hat{\lambda}^{(kk)} - \lambda^{(kk)}\|_{\infty}$ given above
yields Eq.~\eqref{eq:thm6} in Theorem~\ref{theorem6}. 
\end{proof}

\section*{Acknowledgements}

This work was partially supported by the Naval Engineering Education Consortium (NEEC), Office of Naval Research (ONR) Award Number N00174-19-1-0011.

\bigskip
\begin{center}
{\large\bf SUPPLEMENTAL MATERIALS}
\end{center}

\begin{description}

\item[Source files:] The source files used to compile this document can be found in summary.zip.

\item[Data:] The data used in section 5 can be found in data.zip.

\item[Code:] The code for simulations and data anlyses can be found in code-and-results.zip. 
This file also includes the simulation results in csv format (please note that the simulations can take a long time to complete). 
The code for data analyses requires the files found in data.zip. 
All of these files can also be found at\\
\url{https://github.com/johneverettkoo/pabm-grdpg}.

\item[Sparsity:] Additional supplemental materials regarding simulations for the effect of the sparsity parameter can be found in sparsity-sim.zip. 

\end{description}

\renewcommand\refname{References}
\printbibliography

\end{document}
